\documentclass[ejsv2,preprint]{imsart}

\RequirePackage[numbers]{natbib}
\RequirePackage[hidelinks]{hyperref}
\RequirePackage{amsmath}
\RequirePackage{amssymb,amsthm,mathtools}
\RequirePackage{microtype}

\startlocaldefs

\newtheorem{theorem}{Theorem}[section]
\newtheorem{proposition}[theorem]{Proposition}
\newtheorem{lemma}[theorem]{Lemma}
\newtheorem{corollary}[theorem]{Corollary}
\theoremstyle{definition}
\newtheorem{assumption}[theorem]{Assumption}
\newtheorem{definition}[theorem]{Definition}

\theoremstyle{remark}
\newtheorem{remark}[theorem]{Remark}

\newcommand{\eps}{\varepsilon}
\newcommand{\Pcal}{\mathcal P}
\newcommand{\E}{\mathbb E}
\newcommand{\Prob}{\mathbb P}
\newcommand{\osc}{\operatorname{osc}}

\newcommand{\dd}{\,\mathrm d}
\newcommand{\R}{\mathbb R}
\newcommand{\cX}{\mathcal X}
\newcommand{\cY}{\mathcal Y}

\endlocaldefs

\begin{document}

\begin{frontmatter}

\title{Uniform Statistical Convergence of Empirical Sinkhorn Potentials\\
with Exponential and Polynomial Dependence on the Regularization Parameter}
\runtitle{Uniform Statistical Convergence of Empirical Sinkhorn Potentials}

\begin{aug}
\author{\fnms{Denis} \snm{Belomestny}\ead[label=e1]{denis.belomestny@uni-due.de}}
\address{
Department of Mathematics,
University of Duisburg-Essen\\
\printead{e1}
}
\runauthor{D. Belomestny}
\end{aug}

\begin{abstract}
We study the empirical Sinkhorn estimator of the entropic optimal transport potentials under the uniform loss.
Since the potentials are only unique up to additive constants, we measure the error using the quotient supremum norm, defined as $d_\infty([u],[v]) = \inf_{a\in\R}\|u-v-a\|_\infty = \tfrac12\osc(u-v)$.
For a fixed regularization parameter $\eps>0$, we establish a non-asymptotic statistical rate of $n^{-1/2}$.
This is achieved by combining the Birkhoff--Hopf contraction theorem with entropy bounds on normalized kernel sections.
However, the constant in this bound grows exponentially with $1/\eps$.
To improve this, we isolate geometric conditions under which the empirical estimator maintains the $n^{-1/2}$ rate but features polynomial dependence on $1/\eps$.
The key requirement is a polynomial residual-stability estimate for the population Sinkhorn map.
We provide sufficient criteria for this, including a polynomial contraction property and a local inverse estimate.
Furthermore, we introduce two rigorously verifiable model classes—an $\eps$-weak residual-interaction class obtained after separable centering and another based on connected tight-edge graphs for fixed discrete costs—where the polynomial rate is guaranteed without relying on abstract resolvent assumptions.
Finally, we establish matching minimax lower bounds demonstrating that the $\eps n^{-1/2}$ rate cannot be uniformly improved in the bounded-interaction regime.
\end{abstract}

\begin{keyword}[class=MSC]
\kwdgroup[type=primary]{\kwd{62G05}}
\kwdgroup[type=secondary]{\kwd{49Q22}\kwd{62G20}}
\end{keyword}

\begin{keyword}
\kwd{Entropic optimal transport}
\kwd{Sinkhorn potentials}
\kwd{empirical processes}
\kwd{uniform convergence}
\kwd{minimax lower bounds}
\end{keyword}

\end{frontmatter}

\section{Introduction}
\label{sec:introduction}

Let $(\cX,d_{\cX})$ and $(\cY,d_{\cY})$ be compact metric spaces, equipped with probability measures $\mu\in\Pcal(\cX)$ and $\nu\in\Pcal(\cY)$.
Given a continuous cost function $c:\cX\times\cY\to\R$ and a regularization parameter $\eps>0$, the entropic optimal transport problem seeks a coupling $\pi$ that minimizes the expected cost plus an entropy penalty $\eps\,\mathrm{KL}(\pi\mid\mu\otimes\nu)$.
Its dual potentials, $(f_\eps,g_\eps)$, uniquely solve the Schrödinger system up to additive constants: $f_\eps(x) = -\eps\log\int_{\cY} \exp((g_\eps(y)-c(x,y))/\eps)\nu(\dd y)$ and $g_\eps(y) = -\eps\log\int_{\cX} \exp((f_\eps(x)-c(x,y))/\eps)\mu(\dd x)$.
We define integral operators $(\mathsf T_\rho^{\cY}h)(x) := -\eps\log\int_{\cY} \exp((h(y)-c(x,y))/\eps)\rho(\dd y)$ and $(\mathsf T_\sigma^{\cX}u)(y) := -\eps\log\int_{\cX} \exp((u(x)-c(x,y))/\eps)\sigma(\dd x)$.
This allows us to express the potentials as $f_\eps=\mathsf T_\nu^{\cY}g_\eps$ and $g_\eps=\mathsf T_\mu^{\cX}f_\eps$.
Modulo constants, $f_\eps$ is a fixed point of the composition $\Phi_\eps := \mathsf T_\nu^{\cY}\mathsf T_\mu^{\cX}$.
Given independent and identically distributed samples $X_1,\ldots,X_n\sim\mu$ and $Y_1,\ldots,Y_n\sim\nu$, we construct the empirical measures $\widehat\mu_n=\frac1n\sum_{i=1}^n\delta_{X_i}$ and $\widehat\nu_n=\frac1n\sum_{j=1}^n\delta_{Y_j}$.
The empirical Sinkhorn potentials $(\widehat f_{\eps,n},\widehat g_{\eps,n})$ solve the empirical analogs of the Schrödinger system, meaning $\widehat f_{\eps,n}$ is the fixed point of $\widehat\Phi_{\eps,n} := \mathsf T_{\widehat\nu_n}^{\cY}\mathsf T_{\widehat\mu_n}^{\cX}$.
Throughout, the empirical Sinkhorn estimator denotes the exact solution of the empirical Schrödinger system. We do not include the additional optimization error arising from terminating Sinkhorn's algorithm after finitely many iterations.

We highlight two main statistical regimes for these empirical potentials.
First, assuming a bounded Lipschitz cost and finite entropy integrals for the normalized kernel-section classes, the potentials converge at the parametric rate $n^{-1/2}$, although the leading constant may scale exponentially with $1/\eps$.
Second, if the population Sinkhorn map additionally exhibits polynomial residual stability, the $n^{-1/2}$ rate holds with a constant that grows only polynomially in $1/\eps$.
To complement our upper bounds, we provide a matching minimax lower bound in the bounded-interaction regime, confirming that the parametric rate and the linear dependence on the regularization parameter cannot be uniformly improved.
Throughout this paper, $\delta \in (0,1)$ denotes a generic probability tolerance.

\paragraph{Contributions.}
The main contributions of this paper are the following.

\begin{enumerate}

\item
We prove a nonasymptotic parametric bound for empirical Sinkhorn potentials
in quotient sup norm. For every fixed $\eps>0$, the empirical
potential satisfies, with high probability,
\[
d_\infty([\widehat f_{\eps,n}],[f_\eps])
 \leq
 \frac{C_\eps}{\sqrt n}
 \bigl(1+\sqrt{\log(1/\delta)}\bigr),
\]
where $C_\eps$ is given explicitly. The proof separates empirical
fluctuation of the Sinkhorn operator from deterministic inverse stability of
its population fixed point.

\item
We formulate an abstract polynomial-stability principle. If the summed
entropy integral is bounded by
$A_0\eps^{-p}\sqrt{H_\eps}$, the envelope is bounded by
$E_0\eps^{-a}$, and the population residual satisfies an inverse
stability estimate of order $K_0\eps^{-r}$, then
\[
d_\infty([\widehat f_{\eps,n}],[f_\eps])
\le
\frac{CK_0}{\sqrt n}
\left[
A_0\eps^{1-r-p}\sqrt{H_\eps}
+
E_0\eps^{1-r-a}\sqrt{\log(4/\delta)}
\right].
\]
This identifies separately the contributions of empirical complexity,
kernel envelopes, and inverse stability.

\item
We give separate verifiable mechanisms for controlling the empirical
Gibbs-score classes. Lower-mass conditions on metric balls, combined with
local Hölder or Laplace-type behavior, yield polynomial envelope bounds for
normalized Gibbs kernels. Covering-number bounds are then obtained from
metric-entropy assumptions on the endpoint supports and Lipschitz
parameterizations of the kernel sections.

\item
We show that the polynomial stability assumptions are non-vacuous on two
proper classes of models. The first is a weak-interaction class
\[
c_\eps(x,y)
 =a_\eps(x)+b_\eps(y)
   +\eps\ell_\eps(x,y),
\]
with controlled oscillation and regularity of $\ell_\eps$.
The second is a finite-state class characterized by a connected tight graph
and a strict-complementarity condition. In both cases the relevant inverse
stability constants are controlled polynomially in $\eps^{-1}$.

\item
For the bounded weak-interaction class
$(\gamma_{\mathrm{osc}}=q=0)$, we obtain the concrete rate
\[
d_\infty([\widehat f_{\eps,n}],[f_\eps])
 \leq
 C\,\eps
 \frac{1+\sqrt{\log(1/\delta)}}{\sqrt n}.
\]
We complement this result with a two-point minimax lower bound of order
$\eps/\sqrt n$, including the corresponding deviation dependence.
Thus the joint dependence on $n$ and $\eps$ is optimal on this
model class up to universal constants.

\end{enumerate}

\section{Literature overview}
\label{sec:literature}

\paragraph{Statistical estimation of entropic optimal transport.}
A substantial literature studies the empirical approximation of entropic
optimal transport costs and Sinkhorn divergences. Early nonasymptotic results
established parametric $n^{-1/2}$ convergence for fixed regularization,
although with constants that may deteriorate rapidly as
$\eps\downarrow0$; see, for example,
\cite{GenevayEtAl2019,MenaNilesWeed2019}.
Dimension-free parametric rates for several quantities associated with
entropic optimal transport were subsequently obtained in
\cite{RigolletStromme2022}. Intrinsic- and lower-complexity adaptation were
investigated in \cite{Stromme2023,GroppeHundrieser2024}, where the statistical
complexity can depend on the smaller effective dimension of the two
marginals. These works primarily concern transport costs, couplings,
barycentric projections, or regression-type functionals, rather than
nonasymptotic estimation of the dual potentials in quotient sup norm.
We emphasize that these statistical bounds for the exact empirical potentials should be distinguished from algorithmic convergence rates of finite Sinkhorn iterations.

\paragraph{Stability and regularity of Schrödinger potentials.}
Qualitative stability of Schrödinger potentials and entropic couplings under
perturbations of the marginals was established in
\cite{GhosalNutzBernton2022,NutzWiesel2022sinkhorn}.
Quantitative stability estimates for regularized optimal transport were
developed in \cite{EcksteinNutz2022}, while smoothness properties of the
Schrödinger map, including Lipschitz estimates from Wasserstein spaces into
spaces of smooth functions, were studied in
\cite{CarlierChizatLaborde2024}.
These results provide important continuity and regularity mechanisms.
However, they do not by themselves yield a nonasymptotic
$n^{-1/2}$ bound for empirical potentials in quotient sup norm with
explicit polynomial dependence on the regularization parameter.

\paragraph{Limit theory for empirical potentials.}
For fixed $\eps>0$, functional delta-method arguments and
Hadamard differentiability have been used to derive central limit theorems
and bootstrap validity for empirical entropic potentials, transport maps,
and Sinkhorn divergences; see \cite{GoldfeldKatoRiouxSadhu2024}.
Gaussian-process limits for empirical Sinkhorn potentials and related
functionals were also obtained in
\cite{GonzalezSanzLoubesNilesWeed2022}.
This theory shows that a root-$n$ scale is natural when
$\eps$ is fixed. Its main purpose is asymptotic distribution
theory, whereas our objective is a finite-sample sup-norm estimate that
tracks explicitly the deterioration of the constants as
$\eps\downarrow0$.

\paragraph{Small regularization and Schrödinger bridges.}
The convergence of Schrödinger potentials to Kantorovich potentials as
$\eps\downarrow0$ was studied in
\cite{NutzWiesel2022potentials}, with more refined asymptotic results available in
semi-discrete and regular settings; see, for example,
\cite{AltschulerNilesWeedStromme2022}.
Statistical regimes in which the regularization parameter decreases with
sample size have recently received increasing attention
\cite{Mordant2024}.
Related plug-in estimators for Schrödinger bridges have been analyzed in
metrics on path measures \cite{PooladianNilesWeed2024}. Such bounds do not
directly imply uniform convergence of the endpoint potentials, because the
map from a coupling or path measure to its logarithmic scaling potentials
requires a separate quantitative inverse-stability argument.

\paragraph{Positioning of the present work.}
Very recently, Gonz\'alez-Sanz, Nutz and Stromme \cite{GonzalezSanzNutzStromme2026} obtained nonasymptotic parametric bounds for empirical regularized-OT potentials under an empirical $L^2$ loss, with explicit polynomial dependence on the regularization parameter. Their result does not provide uniform control of the potentials modulo constants. In contrast, our loss is the quotient supremum metric and our analysis is based on uniform empirical-process control combined with inverse stability of the Sinkhorn residual. Further related works include Puchkin et al.~\cite{PuchkinEtAl2025}, who studied the sample complexity of Schr\"odinger potential estimation under KL loss rather than the exact empirical Sinkhorn fixed point in quotient sup norm. Additionally, Greco and Tamanini \cite{GrecoTamanini2026} studied the stability of gradients and Hessians and the convergence of Sinkhorn iterates with polynomial regularization dependence, though without addressing the statistical sampling problem considered here.
The present paper isolates this inverse-stability step. We combine empirical control of the Sinkhorn residual with stability of the population Schr\"odinger map to obtain finite-sample estimates in the quotient metric
\[
d_\infty([f],[g])
  :=\inf_{a\in\mathbb R}\|f-g-a\|_\infty.
\]
For fixed $\eps$, the resulting rate is parametric in $n$.
Under additional, verifiable geometric assumptions, its dependence on
$\eps$ is polynomial. For an explicitly defined weak-interaction
class, we also prove a minimax lower bound matching the upper dependence on
$n$ and $\eps$.

\section{Parametric Rates with Exponential Dependence}
\label{sec:exponential}

\subsection{Deterministic Contraction and Empirical Processes}
Let $C_c := \sup c - \inf c$ be the oscillation of the cost function, and define the contraction factor $\tau_\eps := \tanh(C_c/(2\eps)) < 1$.
Furthermore, we define the normalized population sections:
\begin{align}
 r_x^\nu(y) &:= \frac{\exp((g_\eps(y)-c(x,y))/\eps)}{\int_{\cY}\exp((g_\eps(z)-c(x,z))/\eps)\nu(\dd z)}, \label{eq:exp-r}\\
 s_y^\mu(x) &:= \frac{\exp((f_\eps(x)-c(x,y))/\eps)}{\int_{\cX}\exp((f_\eps(z)-c(z,y))/\eps)\mu(\dd z)}. \label{eq:exp-s}
\end{align}
Let $\mathcal R_{\nu,\eps}=\{r_x^\nu:x\in\cX\}$ and $\mathcal R_{\mu,\eps}=\{s_y^\mu:y\in\cY\}$, and define the upper envelope bound $B_\eps=e^{2C_c/\eps}$.

\begin{lemma}
\label{lem:soft-basic}
For any probability measure $\rho$ and bounded functions $h,h'$, we have $\osc(\mathsf T_\rho^{\cY}h)\le C_c$ and $\osc(\mathsf T_\rho^{\cY}h-\mathsf T_\rho^{\cY}h')\le\osc(h-h')$.
The analogous statements hold for $\mathsf T_\sigma^{\cX}$.
\end{lemma}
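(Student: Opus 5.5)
Both claims follow from two elementary properties of the log-sum-exp map $L(\phi)(x):=-\eps\log\int_{\cY}\exp((\phi(x,y))/\eps)\rho(\dd y)$ applied pointwise in $x$ to $\phi(x,y)=h(y)-c(x,y)$: monotonicity ($\phi\le\psi$ pointwise implies $L\phi\ge L\psi$) and translation equivariance ($L(\phi+a)=L\phi-a$ for constants $a$). I will derive everything from these, without differentiating.

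\emph{First claim.} Fix $x,x'\in\cX$. Since $c(x,y)\ge c(x',y)-C_c$ for all $y$, we have $h(y)-c(x,y)\le h(y)-c(x',y)+C_c$. Monotonicity and equivariance then give $(\mathsf T_\rho^{\cY}h)(x)\ge(\mathsf T_\rho^{\cY}h)(x')-C_c$. Exchanging the roles of $x$ and $x'$ yields $|(\mathsf T_\rho^{\cY}h)(x)-(\mathsf T_\rho^{\cY}h)(x')|\le C_c$, hence $\osc(\mathsf T_\rho^{\cY}h)\le C_c$.

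I should check that $\mathsf T_\rho^{\cY}h$ is finite. Because $h$ is bounded and $c$ is continuous on a compact set, the integrand is bounded above and below by positive constants. The integral is therefore finite and positive, and the logarithm is well defined.

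\emph{Second claim.} Set $m:=\inf(h-h')$ and $M:=\sup(h-h')$, so that $h'+m\le h\le h'+M$ pointwise. Subtracting $c(x,y)$ preserves these inequalities. Monotonicity and equivariance then give, for every $x$,
\[
\mathsf T_\rho^{\cY}h'(x)-M\le \mathsf T_\rho^{\cY}h(x)\le \mathsf T_\rho^{\cY}h'(x)-m .
\]
Consequently $\mathsf T_\rho^{\cY}h-\mathsf T_\rho^{\cY}h'$ takes values in $[-M,-m]$, and its oscillation is at most $M-m=\osc(h-h')$. The argument for $\mathsf T_\sigma^{\cX}$ is identical, with the roles of the variables swapped and $\osc$ of $c$ unchanged.

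\emph{Obstacles.} I see no real obstacle. The only care needed is the finiteness remark above and keeping the signs straight, since the map $\mathsf T$ is order-reversing. This lemma gives only the non-strict bound (contraction factor $1$). The strict factor $\tau_\eps$ would require Birkhoff--Hopf and is not needed here.
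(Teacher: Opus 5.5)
Your proof is correct and follows essentially the same route as the paper: a pointwise comparison of the integrands (via $c(x,y)\ge c(x',y)-C_c$ for the first claim, and the sandwich $h'+\inf(h-h')\le h\le h'+\sup(h-h')$ for the second), combined with the order-reversing and translation-equivariant behaviour of $-\eps\log\int\exp(\cdot/\eps)\,\rho(\dd y)$. Your remark on finiteness of the integral is a small check that the paper leaves implicit.
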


\begin{lemma}
\label{lem:birkhoff}
For every probability measure $\rho$ on $\cY$, the strict contraction $\osc(\mathsf T_\rho^{\cY}h-\mathsf T_\rho^{\cY}h') \le \tau_\eps\osc(h-h')$ holds.
The same applies uniformly to $\mathsf T_\sigma^{\cX}$.
\end{lemma}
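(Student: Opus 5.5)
We prove the contraction by writing the difference of the two soft-min operators as an integral along a segment and bounding it through a Hilbert-metric (Birkhoff) argument on the Gibbs kernel $K(x,y)=e^{-c(x,y)/\eps}$.

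\emph{Reduction to a covariance.} Set $h_t=h'+t(h-h')$ for $t\in[0,1]$ and $w=h-h'$. Differentiating under the integral,
\[
\frac{\dd}{\dd t}(\mathsf T_\rho^{\cY}h_t)(x)=-\E_{p_{x,t}}[w],
\qquad p_{x,t}(\dd y)\propto e^{(h_t(y)-c(x,y))/\eps}\rho(\dd y).
\]
Hence, for $x,x'\in\cX$,
\[
(\mathsf T h-\mathsf T h')(x)-(\mathsf T h-\mathsf T h')(x')=\int_0^1\bigl(\E_{p_{x',t}}[w]-\E_{p_{x,t}}[w]\bigr)\dd t .
\]
For any two probability measures $p,q$ we have $|\E_p w-\E_q w|\le \|p-q\|_{TV}\osc(w)$, where $\|p-q\|_{TV}=\sup_A|p(A)-q(A)|$. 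So it suffices to show
\[
\|p_{x,t}-p_{x',t}\|_{TV}\le\tau_\eps
\]
uniformly in $x,x',t$. This reduction also gives the non-strict part of Lemma~\ref{lem:soft-basic} for free.

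\emph{Key step.} The measures $p_{x,t}$ and $p_{x',t}$ share the common factor $e^{h_t/\eps}\rho$. Their density ratio is therefore proportional to $e^{(c(x',y)-c(x,y))/\eps}$, and the ratio of its sup to its inf over $y$ is at most $e^{2C_c/\eps}$, since $|c(x',y)-c(x,y)-(c(x',y')-c(x,y'))|\le 2C_c$. In Hilbert projective metric terms, $d_H(p_{x,t},p_{x',t})\le \Delta:=2C_c/\eps$.

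\emph{Main obstacle.} We then need the sharp bound
\[
\|p-q\|_{TV}\le\tanh(d_H(p,q)/4),
\]
which would give $\tanh(C_c/(2\eps))=\tau_\eps$. I would prove it directly. Write $q=\phi\,p$ with $m\le\phi\le M$ and $M/m\le e^{\Delta}$. For any set $A$ with $p(A)=a$, the quantity $q(A)-p(A)$ is maximized by putting density $M$ on $A$ and $m$ on $A^c$, subject to $Ma+m(1-a)=1$. This gives
\[
q(A)-p(A)=\frac{Ma}{Ma+m(1-a)}-a.
\]
Maximizing over $a\in[0,1]$ yields the optimum at $a=\sqrt m/(\sqrt M+\sqrt m)$, with value
\[
\frac{\sqrt M-\sqrt m}{\sqrt M+\sqrt m}=\tanh\bigl(\tfrac14\log(M/m)\bigr).
\]
The only care needed is the two-point extremal reduction, which is a linear program over the densities $\phi$. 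If instead one only quotes Birkhoff--Hopf, namely that a positive kernel with projective diameter $\Delta$ contracts the Hilbert metric by $\tanh(\Delta/4)$, then the constant matches as well. The direct computation is self-contained, however, and handles general (non-finite) $\rho$ without extra work.

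\emph{Conclusion.} Taking the supremum over $x,x'$ gives the stated contraction. Because the bound on $d_H$ is uniform in $\rho$ and $t$, the result holds for every probability measure $\rho$. The statement for $\mathsf T_\sigma^{\cX}$ follows by exchanging the roles of $x$ and $y$.
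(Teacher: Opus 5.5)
Your argument is correct and reaches the same constant $\tau_\eps=\tanh(C_c/(2\eps))$, but by a genuinely different route from the paper's.

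The paper works with the linear positive operator $\mathsf L_\rho v(x)=\int e^{-c(x,y)/\eps}v(y)\,\rho(\dd y)$. It bounds the projective diameter by the kernel cross-ratio, $\Delta(\mathsf L_\rho)\le 2C_c/\eps$, and invokes the Birkhoff--Hopf theorem to get the Hilbert-metric contraction factor $\tanh(\Delta/4)$. It then transfers this to $\mathsf T_\rho^{\cY}$ via $u=e^{h/\eps}$, $v=e^{h'/\eps}$, using $d_{\mathrm H}(\mathsf L_\rho u,\mathsf L_\rho v)=\eps^{-1}\osc(\mathsf T_\rho^{\cY}h-\mathsf T_\rho^{\cY}h')$.

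You instead linearize the nonlinear soft-min along $h_t$. This turns the claim into a uniform Dobrushin-coefficient bound $\|p_{x,t}-p_{x',t}\|_{\mathrm{TV}}\le\tau_\eps$ for the Gibbs measures. You then prove the sharp inequality $\|p-q\|_{\mathrm{TV}}\le\tanh\bigl(\tfrac14\log(\sup\phi/\inf\phi)\bigr)$ by hand. The observation that $\dd p_{x',t}/\dd p_{x,t}\propto e^{(c(x,\cdot)-c(x',\cdot))/\eps}$ depends on neither $h_t$ nor $\rho$ is what makes the bound uniform. In effect you re-derive exactly the special case of Birkhoff--Hopf that is needed, through its known link with Dobrushin's coefficient.

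The paper's proof is shorter but hands the substance to Birkhoff's theorem for positive integral operators. Yours is self-contained and yields the nonexpansiveness of Lemma~\ref{lem:soft-basic} as a by-product. It also exhibits the derivative $-\E_{p_{x,t}}[w]$, the same Markov-kernel linearization that underlies $\mathcal L_\eps$ in Proposition~\ref{prop:local-inverse}.

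One small presentational point concerns the extremal step. It is cleanest not to normalize $\phi$, since only $M/m$ is constrained. Write $q(A)=X/(X+Y)$ with $X=\int_A\phi\,\dd p\le Ma$ and $Y=\int_{A^c}\phi\,\dd p\ge m(1-a)$. Monotonicity of $X/(X+Y)$ then gives $q(A)\le Ma/(Ma+m(1-a))$ directly, which avoids the somewhat confusing phrase ``subject to $Ma+m(1-a)=1$''.
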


\begin{assumption}
\label{ass:metric-entropy}
The cost $c$ is Lipschitz continuous in both variables with constants $L_{\cX}$ and $L_{\cY}$.
Moreover, the entropy integrals $\Gamma_{\nu,\eps}$ and $\Gamma_{\mu,\eps}$ are finite, where $\Gamma_{\rho,\eps} := \int_0^{B_\eps} \sqrt{\log(2N(t,\mathcal R_{\rho,\eps},\|\cdot\|_\infty))}\dd t$ for $\rho\in\{\mu,\nu\}$. 
\end{assumption}

\begin{lemma}
\label{lem:exp-entropy}
Under Assumption~\ref{ass:metric-entropy}, the sections are bounded $B_\eps^{-1}\le r_x^\nu(y),s_y^\mu(x)\le B_\eps$ and are Lipschitz continuous: $\|r_x^\nu-r_{x'}^\nu\|_\infty \le 2B_\eps L_{\cX}\eps^{-1} d_{\cX}(x,x')$.
Analogous bounds hold for $s_y^\mu$.
\end{lemma}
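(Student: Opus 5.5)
The proof rests on two facts. First, oscillation control of the potentials via Lemma~\ref{lem:soft-basic}. Second, an explicit formula for the denominator.

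\textbf{Step 1: oscillation bounds.} Since $f_\eps=\mathsf T_\nu^{\cY}g_\eps$ and $g_\eps=\mathsf T_\mu^{\cX}f_\eps$, Lemma~\ref{lem:soft-basic} gives $\osc(f_\eps)\le C_c$ and $\osc(g_\eps)\le C_c$.

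\textbf{Step 2: the denominator.} The denominator in \eqref{eq:exp-r} equals $\exp(-f_\eps(x)/\eps)$ by the Schrödinger equation, so
\[
r_x^\nu(y)=\exp\bigl((g_\eps(y)+f_\eps(x)-c(x,y))/\eps\bigr).
\]

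\textbf{Step 3: two-sided bounds.} I bound the exponent $F(x,y):=f_\eps(x)+g_\eps(y)-c(x,y)$ two-sidedly. Two facts are used:
\begin{itemize}
\item $\int e^{F(x,y)/\eps}\nu(\dd y)=1$ for every $x$, so for each $x$ there is $y^*$ with $F(x,y^*)\ge0$, and some $y_*$ with $F(x,y_*)\le 0$ (a nonnegative function integrating to $1$ against a probability measure takes values on both sides of $1$, up to null sets; use essential sup and inf).
\item For fixed $x$, $\osc_y F(x,\cdot)\le \osc(g_\eps)+\osc_y c(x,\cdot)\le 2C_c$.
\end{itemize}
Hence $|F(x,y)|\le 2C_c$ for all $y$, giving $B_\eps^{-1}\le r_x^\nu\le B_\eps$. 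The same argument with the roles of $\mu,\nu$ swapped, using $\int e^{F(x,y)/\eps}\mu(\dd x)=1$, handles $s_y^\mu$.

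If a sharper constant were desired one could use $\osc_y c\le C_c$ together with $\osc g\le C_c$ separately, but $2C_c$ already matches $B_\eps$. The one point needing care is that the potentials are chosen as continuous versions satisfying the equations everywhere, not merely a.e.; I take this as part of their definition, since the right-hand sides of the Schrödinger system are continuous.

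\textbf{Step 4: Lipschitz bound.} Write
\[
r_x^\nu(y)-r_{x'}^\nu(y)=r_{x'}^\nu(y)\bigl(e^{\Delta/\eps}-1\bigr),\qquad \Delta=\bigl(f_\eps(x)-f_\eps(x')\bigr)-\bigl(c(x,y)-c(x',y)\bigr).
\]
The potential $f_\eps=\mathsf T_\nu^{\cY}g_\eps$ is $L_{\cX}$-Lipschitz. Indeed, changing $x$ to $x'$ multiplies the integrand by a factor in $[e^{-L_{\cX}d/\eps},e^{L_{\cX}d/\eps}]$, and $-\eps\log$ converts this to an additive change of at most $L_{\cX}d$, where $d=d_{\cX}(x,x')$. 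So $|\Delta|\le 2L_{\cX}d$.

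This linearizes cleanly only for small $d$. I therefore avoid $|e^t-1|$ globally and argue locally along the metric:
\begin{itemize}
\item For $2L_{\cX}d/\eps\le 1$, use $|e^t-1|\le |t|e^{|t|}$. This yields a factor $B_\eps\cdot 2L_{\cX}d\,\eps^{-1}$ times a constant $e^{|t|}$ that must be absorbed, which is the main obstacle.
\item The cleaner route is to write $r_x^\nu=\exp(F/\eps)$ and use the mean value theorem on $\exp$. On the segment between $F(x,y)/\eps$ and $F(x',y)/\eps$, both endpoints lie in $[-2C_c/\eps,2C_c/\eps]$ by Step 3, so the derivative of $\exp$ is at most $B_\eps$. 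Combined with $|F(x,y)-F(x',y)|\le 2L_{\cX}d$, this gives
\[
\|r_x^\nu-r_{x'}^\nu\|_\infty\le 2B_\eps L_{\cX}\eps^{-1}d
\]
directly, for all $d$, with no localization. This is the approach I commit to.
\end{itemize}
The bound for $s_y^\mu$ is symmetric, with $L_{\cY}$ in place of $L_{\cX}$.
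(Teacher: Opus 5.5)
Your proof is correct and follows essentially the paper's route. The envelope comes from the oscillation bounds of Lemma~\ref{lem:soft-basic}. The Lipschitz bound comes from a log-difference estimate of $2L_{\cX}\eps^{-1}d_{\cX}(x,x')$, one contribution from the numerator and one from the normalizer; you identify the normalizer with $e^{-f_\eps(x)/\eps}$, so its contribution becomes the $L_{\cX}$-Lipschitz continuity of $f_\eps$. This is combined with the mean-value theorem and the upper envelope $B_\eps$.

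The only cosmetic difference is in the envelope step. You center the exponent $F$ using $\int e^{F(x,\cdot)/\eps}\dd\nu=1$, whereas the paper shifts $c$ and $g_\eps$ into $[0,C_c]$ and bounds numerator and denominator directly. Both give $|F|\le 2C_c$.
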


\begin{lemma}
\label{lem:emp-process}
For a pointwise measurable class $\mathcal F$ bounded in $[0,B]$ with entropy integral $\Gamma(\mathcal F)$, and for any $\delta\in(0,1)$, independent observations $Z_1,\ldots,Z_n\sim P$ satisfy
\begin{equation}
 \sup_{h\in\mathcal F}|(P_n-P)h| \le \frac{C_{\mathrm D}}{\sqrt n} \left[ \Gamma(\mathcal F) + B\sqrt{\log\frac{2}{\delta}} \right]
\end{equation}
with probability at least $1-\delta$, for some universal constant $C_{\mathrm D}$.
\end{lemma}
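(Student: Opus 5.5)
The plan is to prove Lemma~\ref{lem:emp-process} in two stages. First we bound the expectation of $Z:=\sup_{h\in\mathcal F}|(P_n-P)h|$ by symmetrization and Dudley chaining. Then we turn this into a high-probability bound using a bounded-differences concentration inequality. Pointwise measurability lets us replace the supremum by a supremum over a countable subclass, so $Z$ is a measurable random variable and the standard inequalities apply without outer-probability caveats.

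\emph{Expectation bound.} Introduce i.i.d.\ Rademacher signs $\xi_1,\dots,\xi_n$ independent of the data. The classical symmetrization inequality gives
\[
\E Z\le 2\,\E\sup_{h\in\mathcal F}\Bigl|\frac1n\sum_{i=1}^n\xi_i h(Z_i)\Bigr|.
\]
Condition on $Z_1,\dots,Z_n$. The Rademacher process $h\mapsto n^{-1/2}\sum_i\xi_i h(Z_i)$ is sub-Gaussian with respect to the empirical $L^2(P_n)$ metric, and that metric is dominated by $\|\cdot\|_\infty$. Dudley's entropy integral bound therefore applies. Two points need care.
\begin{enumerate}
\item Dudley's bound controls $\E\sup_{h,h'}|X_h-X_{h'}|$, so the absolute value must be handled separately. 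We add a single fixed element $h_0$: its contribution satisfies $\E|n^{-1/2}\sum_i\xi_i h_0(Z_i)|\le B$.
\item The diameter of $\mathcal F$ in sup norm is at most $B$, so the chaining integral runs over $[0,B]$. With the convention $\log(2N)$ this is exactly $\Gamma(\mathcal F)$.
\end{enumerate}
Since the sup-norm covering numbers do not depend on the sample, taking expectations over the data yields
\[
\E Z\le C\,n^{-1/2}\bigl(\Gamma(\mathcal F)+B\bigr).
\]
The extra $B$ is harmless: $\log(2N)\ge\log 2$ gives $\Gamma(\mathcal F)\ge B\sqrt{\log 2}$, so $B$ is absorbed into $\Gamma$ up to a universal constant.

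\emph{Concentration.} Each $h$ takes values in $[0,B]$. Replacing a single observation $Z_i$ therefore changes $Z$ by at most $B/n$. McDiarmid's inequality then gives, with probability at least $1-\delta$,
\[
Z\le \E Z+B\sqrt{\log(1/\delta)/(2n)}.
\]
Since $\log(1/\delta)\le\log(2/\delta)$, combining with the expectation bound gives the claim, with $C_{\mathrm D}$ the maximum of the universal constants that appear. (Talagrand's inequality would give a variance-sensitive refinement, but it is not needed here.)

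\emph{Main obstacle.} The only delicate step is the chaining argument. We must check that the sub-Gaussian increment constant of the conditional Rademacher process, measured in $L^2(P_n)$, is bounded by the sup-norm distance uniformly over all samples. We must also check that truncating the chaining at the sup-norm diameter $B$ matches the upper limit in the definition of $\Gamma$. The first point holds because $\|h-h'\|_{L^2(P_n)}\le\|h-h'\|_\infty$. The second holds because covering numbers equal $1$ beyond the diameter, so the integrand $\sqrt{\log 2}$ there contributes only $O(B)$, which is again absorbed into $\Gamma(\mathcal F)$.
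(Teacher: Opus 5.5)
Your proposal is correct and follows essentially the same route as the paper's proof: symmetrization and Dudley's entropy integral with sup-norm covering numbers bound the expectation, and McDiarmid's inequality with bounded differences $B/n$ gives the deviation term. Your handling of the fixed element $h_0$ and of measurability fills in details the paper leaves implicit; the extra $B$ from $h_0$ is absorbed because $\Gamma(\mathcal F)\ge B\sqrt{\log 2}$.
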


\subsection{The Exponential Theorem}

\begin{theorem}
\label{thm:exp-main}
Suppose Assumption~\ref{ass:metric-entropy} holds.
For any $\delta\in(0,1)$, define the statistical error terms 
\begin{equation}
 q_{\nu,n}(\delta) := \frac{C_{\mathrm D}}{\sqrt n} \left[ \Gamma_{\nu,\eps} + B_\eps\sqrt{\log\frac4\delta} \right], \qquad
 q_{\mu,n}(\delta) := \frac{C_{\mathrm D}}{\sqrt n} \left[ \Gamma_{\mu,\eps} + B_\eps\sqrt{\log\frac4\delta} \right].
\end{equation}
With probability at least $1-\delta$, the estimation errors satisfy:
\begin{equation}
 d_\infty([\widehat f_{\eps,n}],[f_\eps]) \le \frac{\eps B_\eps}{1-\tau_\eps^2} \bigl(\tau_\eps q_{\mu,n}(\delta) + q_{\nu,n}(\delta)\bigr). \label{eq:exp-f-rate}
\end{equation}
The analogous bound holds for $d_\infty([\widehat g_{\eps,n}],[g_\eps])$ by swapping variables.
For a fixed $\eps>0$, this implies a rate of $O_{\Prob}(n^{-1/2})$.
\end{theorem}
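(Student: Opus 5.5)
The plan is to compare the empirical fixed point with the population fixed point through the empirical map $\widehat\Phi_{\eps,n}$. The error splits into a contraction term, handled by Lemma~\ref{lem:birkhoff}, and a residual term, which is controlled by the empirical process bound of Lemma~\ref{lem:emp-process} applied to the normalized sections.

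We work with oscillations, since $d_\infty([u],[v])=\tfrac12\osc(u-v)$. Since $\widehat f:=\widehat f_{\eps,n}=\widehat\Phi_{\eps,n}\widehat f$ and $f_\eps=\Phi_\eps f_\eps$ modulo constants, we write
\[
\osc(\widehat f-f_\eps)\le \osc\bigl(\widehat\Phi_{\eps,n}\widehat f-\widehat\Phi_{\eps,n}f_\eps\bigr)+\osc\bigl(\widehat\Phi_{\eps,n}f_\eps-\Phi_{\eps}f_\eps\bigr).
\]
Applying Lemma~\ref{lem:birkhoff} twice, once to $\mathsf T^{\cX}_{\widehat\mu_n}$ and once to $\mathsf T^{\cY}_{\widehat\nu_n}$, bounds the first term by $\tau_\eps^2\osc(\widehat f-f_\eps)$. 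Rearranging gives
\[
\osc(\widehat f-f_\eps)\le(1-\tau_\eps^2)^{-1}\osc\bigl(\widehat\Phi_{\eps,n}f_\eps-\Phi_\eps f_\eps\bigr).
\]
This produces the factor $1/(1-\tau_\eps^2)$.

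Next we decompose the residual. Let $\widehat g:=\mathsf T^{\cX}_{\widehat\mu_n}f_\eps$. Then
\[
\widehat\Phi_{\eps,n}f_\eps-\Phi_\eps f_\eps=\bigl(\mathsf T^{\cY}_{\widehat\nu_n}\widehat g-\mathsf T^{\cY}_{\widehat\nu_n}g_\eps\bigr)+\bigl(\mathsf T^{\cY}_{\widehat\nu_n}g_\eps-\mathsf T^{\cY}_{\nu}g_\eps\bigr).
\]
By Lemma~\ref{lem:birkhoff}, the first bracket has oscillation at most $\tau_\eps\osc(\widehat g-g_\eps)$.

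For the second bracket we use the normalized sections. Pointwise in $x$,
\[
\mathsf T^{\cY}_{\widehat\nu_n}g_\eps(x)-\mathsf T^{\cY}_\nu g_\eps(x)=-\eps\log\bigl(1+(\widehat\nu_n-\nu)r^\nu_x\bigr),
\]
because $\int r_x^\nu\,\dd\nu=1$. Likewise,
\[
\widehat g(y)-g_\eps(y)=-\eps\log\bigl(1+(\widehat\mu_n-\mu)s^\mu_y\bigr).
\]
The main technical step is linearizing the logarithm without losing control. We use $|\log(1+t)|\le |t|/\min(1,1+t)$. The quantity $1+(\widehat\nu_n-\nu)r_x^\nu=\widehat\nu_n r_x^\nu$ is an empirical average of values $r_x^\nu\ge B_\eps^{-1}$ by Lemma~\ref{lem:exp-entropy}. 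Hence $|\log(1+t)|\le B_\eps|t|$, and this step costs the factor $B_\eps$. We then bound the oscillation crudely by twice the sup norm, keeping the factor $2$ to cancel against the $\tfrac12$ in $d_\infty$.

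The empirical process step uses Lemma~\ref{lem:emp-process}. We apply it to the class $\mathcal R_{\nu,\eps}$, which takes values in $[0,B_\eps]$, with confidence level $\delta/2$. This gives
\[
\sup_x|(\widehat\nu_n-\nu)r_x^\nu|\le q_{\nu,n}(\delta)
\]
with probability at least $1-\delta/2$, which is where $\log(4/\delta)$ comes from. The same argument applied to $\mathcal R_{\mu,\eps}$ gives the bound $q_{\mu,n}(\delta)$. A union bound then makes both hold simultaneously with probability at least $1-\delta$. Pointwise measurability follows from the Lipschitz continuity in Lemma~\ref{lem:exp-entropy}, since the sup over $x$ reduces to a countable dense subset of the compact $\cX$.

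Combining the pieces,
\[
d_\infty\le\tfrac12(1-\tau_\eps^2)^{-1}\cdot 2\eps B_\eps\bigl(\tau_\eps q_{\mu,n}+q_{\nu,n}\bigr),
\]
which is \eqref{eq:exp-f-rate}. For the bound on $g$ we swap the roles of $\cX$ and $\cY$. For fixed $\eps$, every constant is finite by Assumption~\ref{ass:metric-entropy}, so the rate is $O_\Prob(n^{-1/2})$.

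I expect the main obstacle to be bookkeeping the constant exactly: we must keep the factor $2$ from $\osc\le 2\|\cdot\|_\infty$, and we must justify the lower bound $\widehat\nu_n r_x^\nu\ge B_\eps^{-1}$ so that the logarithm is controlled deterministically, with no small-$t$ condition on the sample size.
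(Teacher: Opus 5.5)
Your proposal is correct and follows the paper's proof essentially step for step. It uses the same three-term splitting of $\widehat\Phi_{\eps,n}\widehat f_{\eps,n}-\Phi_\eps f_\eps$ with Lemma~\ref{lem:birkhoff} applied to each empirical transform, the same logarithm linearization via $\widehat\nu_n r_x^\nu\ge B_\eps^{-1}$ together with $\osc\le 2\|\cdot\|_\infty$, and Lemma~\ref{lem:emp-process} at level $\delta/2$ with a union bound. Your explicit justification of pointwise measurability, via Lipschitz continuity and separability of $\cX$, is a detail the paper leaves implicit.
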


\begin{corollary}
\label{cor:exp-gauge}
Fix a point $x_\circ\in\cX$ and define the zero-anchored functions $f_\eps^\circ=f_\eps-f_\eps(x_\circ)$ and $\widehat f_{\eps,n}^\circ = \widehat f_{\eps,n}-\widehat f_{\eps,n}(x_\circ)$.
Then twice the right-hand side of Equation~\eqref{eq:exp-f-rate} bounds $\|\widehat f_{\eps,n}^\circ-f_\eps^\circ\|_\infty$.
\end{corollary}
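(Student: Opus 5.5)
The corollary follows from Theorem~\ref{thm:exp-main} by a purely deterministic comparison between the quotient metric and the anchored sup norm. We never return to the probabilistic argument. Set $w:=\widehat f_{\eps,n}-f_\eps$. By definition,
\[
d_\infty([\widehat f_{\eps,n}],[f_\eps])=\inf_{a\in\R}\|w-a\|_\infty=\tfrac12\osc(w).
\]
The anchored difference is
\[
\widehat f_{\eps,n}^\circ-f_\eps^\circ=w-w(x_\circ).
\]

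The key step is the elementary inequality $\|w-w(x_\circ)\|_\infty\le\osc(w)$. For any $x\in\cX$, both $w(x)$ and $w(x_\circ)$ lie in the interval $[\inf w,\sup w]$. Hence $|w(x)-w(x_\circ)|\le\sup w-\inf w=\osc(w)$. One can also argue with an arbitrary constant $a$: write $w(x)-w(x_\circ)=(w(x)-a)-(w(x_\circ)-a)$, bound this by $2\|w-a\|_\infty$, and take the infimum over $a$.

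This gives
\[
\|\widehat f_{\eps,n}^\circ-f_\eps^\circ\|_\infty\le 2\,d_\infty([\widehat f_{\eps,n}],[f_\eps]).
\]
On the event of probability at least $1-\delta$ furnished by Theorem~\ref{thm:exp-main}, the right-hand side is at most twice the bound in \eqref{eq:exp-f-rate}. This is exactly the claim.

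The only points needing care are technical. First, $w$ must be bounded so that $\osc(w)$ is finite. This holds because both potentials are finite-valued: by Lemma~\ref{lem:soft-basic} each has oscillation at most $C_c$, so $\osc(w)\le 2C_c$. Second, the supremum is taken over all of $\cX$, not only over sample points. This is fine because the empirical potential is defined on all of $\cX$ through $\mathsf T_{\widehat\nu_n}^{\cY}$. I do not expect a genuine obstacle; the whole content is the factor-two comparison between the quotient norm and the anchored norm.
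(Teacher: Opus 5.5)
Your proposal is correct and follows essentially the same approach as the paper: the anchored difference $w-w(x_\circ)$ vanishes at $x_\circ$, so its sup norm is at most $\osc(w)=2\,d_\infty([\widehat f_{\eps,n}],[f_\eps])$, and Theorem~\ref{thm:exp-main} bounds the latter. Your added remarks on the finiteness of $\osc(w)$ and on the empirical potential being defined on all of $\cX$ are harmless extra care that the paper leaves implicit.
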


\begin{remark}
Because $B_\eps=e^{2C_c/\eps}$ and $(1-\tau_\eps^2)^{-1} = \cosh^2(C_c/(2\eps))$, Theorem~\ref{thm:exp-main} is purely a fixed-$\eps$ result.
The exponential behavior appears twice: from the worst-case lower bounds on normalizing integrals, and from global projective contractions.
\end{remark}

\section{Parametric Rates with Polynomial Dependence}
\label{sec:polynomial}

\subsection{Admissible Potentials and Uniform Score Classes}
Let $\mathcal U_\eps\subset C(\cX)/\R$ be a deterministic set that almost surely contains both $[f_\eps]$ and $[\widehat f_{\eps,n}]$.
For any $[u]\in\mathcal U_\eps$, let $g_u := \mathsf T_\mu^{\cX}u$.
We define normalized sections, invariant to the chosen representative:
\begin{align}
 s_{u,y}^{\mu}(x) &:= \frac{\exp((u(x)-c(x,y))/\eps)}{\int_{\cX}\exp((u(z)-c(z,y))/\eps)\mu(\dd z)}, \label{eq:poly-s}\\
 r_{u,x}^{\nu}(y) &:= \frac{\exp((g_u(y)-c(x,y))/\eps)}{\int_{\cY}\exp((g_u(z)-c(x,z))/\eps)\nu(\dd z)}. \label{eq:poly-r}
\end{align}
We gather these into classes $\mathcal S_{\mu,\eps} = \{s_{u,y}^{\mu}:[u]\in\mathcal U_\eps,\ y\in\cY\}$ and $\mathcal S_{\nu,\eps} = \{r_{u,x}^{\nu}:[u]\in\mathcal U_\eps,\ x\in\cX\}$.

\begin{assumption}
\label{ass:poly-complexity}
There exist constants $A_0,E_0<\infty$, exponents $p\ge a\ge0$, and a term $H_\eps\ge1$ (growing at most polynomially or polylogarithmically in $1/\eps$) such that the envelope $\sup_{h\in\mathcal S_{\mu,\eps}\cup\mathcal S_{\nu,\eps}} \|h\|_\infty \le E_0\eps^{-a}$ and the summed entropy integrals $\Gamma_{\mu,\eps}^{\mathrm{unif}} + \Gamma_{\nu,\eps}^{\mathrm{unif}} \le A_0\eps^{-p}\sqrt{H_\eps}$ are bounded.
\end{assumption}

\begin{assumption}
\label{ass:poly-stability}
There exist $K_0<\infty$ and $r\ge0$ such that for all $[u]\in\mathcal U_\eps$, the inequality $d_\infty([u],[f_\eps]) \le K_0\eps^{-r}d_\infty([u],[\Phi_\eps u])$ holds.
\end{assumption}

\begin{lemma}
\label{lem:uniform-map-perturbation}
Let $Z_{\mu,n}^{\mathrm{unif}} := \sup_{s\in\mathcal S_{\mu,\eps}} |(\widehat\mu_n-\mu)s|$ and $Z_{\nu,n}^{\mathrm{unif}} := \sup_{r\in\mathcal S_{\nu,\eps}} |(\widehat\nu_n-\nu)r|$.
On the event where $Z_{\mu,n}^{\mathrm{unif}}\vee Z_{\nu,n}^{\mathrm{unif}}\le 1/2$, the maps satisfy $\sup_{[u]\in\mathcal U_\eps} d_\infty([\widehat\Phi_{\eps,n}u],[\Phi_\eps u]) \le 2\eps( Z_{\mu,n}^{\mathrm{unif}} + Z_{\nu,n}^{\mathrm{unif}} )$.
\end{lemma}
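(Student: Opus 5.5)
We split the perturbation of the composition $\widehat\Phi_{\eps,n}u=\mathsf T_{\widehat\nu_n}^{\cY}\mathsf T_{\widehat\mu_n}^{\cX}u$ from $\Phi_\eps u=\mathsf T_{\nu}^{\cY}\mathsf T_{\mu}^{\cX}u$ into two one-step perturbations, using the intermediate function $\mathsf T_{\nu}^{\cY}\widehat g$ with $\widehat g:=\mathsf T_{\widehat\mu_n}^{\cX}u$. Since $d_\infty([v],[w])=\tfrac12\osc(v-w)\le\|v-w\|_\infty$ and the triangle inequality holds in the quotient metric, we write
\[
d_\infty([\widehat\Phi_{\eps,n}u],[\Phi_\eps u])\le d_\infty\bigl([\mathsf T_{\widehat\nu_n}^{\cY}\widehat g],[\mathsf T_{\nu}^{\cY}\widehat g]\bigr)+d_\infty\bigl([\mathsf T_{\nu}^{\cY}\widehat g],[\mathsf T_{\nu}^{\cY}g_u]\bigr).
\]
For the second term, Lemma~\ref{lem:soft-basic} gives $\osc(\mathsf T_\nu^{\cY}\widehat g-\mathsf T_\nu^{\cY}g_u)\le\osc(\widehat g-g_u)$, so it suffices to bound $\|\widehat g-g_u\|_\infty$ (up to a constant).

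For the $\cX$-step, we compute pointwise in $y$:
\[
\widehat g(y)-g_u(y)=-\eps\log\frac{\int e^{(u(x)-c(x,y))/\eps}\widehat\mu_n(\dd x)}{\int e^{(u(x)-c(x,y))/\eps}\mu(\dd x)}=-\eps\log\bigl(1+(\widehat\mu_n-\mu)s_{u,y}^\mu\bigr),
\]
since the ratio equals $\widehat\mu_n s^\mu_{u,y}$ and $\mu s^\mu_{u,y}=1$. On the event $|(\widehat\mu_n-\mu)s^\mu_{u,y}|\le Z^{\mathrm{unif}}_{\mu,n}\le 1/2$, the elementary bound $|\log(1+t)|\le 2|t|$ for $|t|\le 1/2$ (valid since $-\log(1/2)=\log 2\le 1$) yields $\|\widehat g-g_u\|_\infty\le 2\eps Z^{\mathrm{unif}}_{\mu,n}$. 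Hence the second term is at most $\tfrac12\osc(\widehat g-g_u)\le\|\widehat g-g_u\|_\infty\le 2\eps Z_{\mu,n}^{\mathrm{unif}}$.

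The first term is the delicate one: the identical computation produces $-\eps\log(1+(\widehat\nu_n-\nu)\tilde r_x)$, where $\tilde r_x$ is the normalized section built from $\widehat g$. This $\widehat g=\mathsf T^{\cX}_{\widehat\mu_n}u$ is not in general of the form $g_{u'}=\mathsf T^{\cX}_\mu u'$ with $[u']\in\mathcal U_\eps$, so $\tilde r_x$ need not lie in $\mathcal S_{\nu,\eps}$. This is the main obstacle. I would first try reordering the split, writing instead
\[
\widehat\Phi_{\eps,n}u-\Phi_\eps u=\bigl(\mathsf T^{\cY}_{\widehat\nu_n}\widehat g-\mathsf T^{\cY}_{\widehat\nu_n}g_u\bigr)+\bigl(\mathsf T^{\cY}_{\widehat\nu_n}g_u-\mathsf T^{\cY}_{\nu}g_u\bigr).
\]
Here Lemma~\ref{lem:soft-basic} applies to $\mathsf T^{\cY}_{\widehat\nu_n}$ for any probability measure, so the first bracket has oscillation at most $\osc(\widehat g-g_u)\le 4\eps Z_{\mu,n}^{\mathrm{unif}}$. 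The second bracket involves exactly the sections $r^\nu_{u,x}\in\mathcal S_{\nu,\eps}$ and is bounded in sup norm by $2\eps Z_{\nu,n}^{\mathrm{unif}}$. Taking half the oscillation of the sum then gives
\[
d_\infty\le \tfrac12\bigl(4\eps Z_{\mu,n}^{\mathrm{unif}}+4\eps Z^{\mathrm{unif}}_{\nu,n}\bigr)=2\eps\bigl(Z^{\mathrm{unif}}_{\mu,n}+Z^{\mathrm{unif}}_{\nu,n}\bigr),
\]
which is the claim. Since every bound is uniform in $[u]\in\mathcal U_\eps$, taking the supremum completes the argument.
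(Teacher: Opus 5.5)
Your final argument is correct and is essentially the paper's proof. The paper uses the same intermediate function $\mathsf T^{\cY}_{\widehat\nu_n}g_u$, bounds $\|\mathsf T^{\cX}_{\widehat\mu_n}u-g_u\|_\infty\le 2\eps Z^{\mathrm{unif}}_{\mu,n}$ and $\|\mathsf T^{\cY}_{\widehat\nu_n}g_u-\mathsf T^{\cY}_\nu g_u\|_\infty\le 2\eps Z^{\mathrm{unif}}_{\nu,n}$ via $|\log(1+t)|\le 2|t|$, and uses nonexpansiveness of the empirical outer transform (Lemma~\ref{lem:soft-basic}). You were right to abandon your first split, since the sections built from $\mathsf T^{\cX}_{\widehat\mu_n}u$ need not lie in $\mathcal S_{\nu,\eps}$.
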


\subsection{The Polynomial Theorem}

\begin{theorem}
\label{thm:poly-main}
Under Assumptions~\ref{ass:poly-complexity} and \ref{ass:poly-stability}, there is a universal constant $C$ such that for any $\delta\in(0,1)$, if the sample size $n$ satisfies:
\begin{equation}
 C\left[ A_0\eps^{-p}\sqrt{H_\eps} + E_0\eps^{-a}\sqrt{\log(4/\delta)} \right] n^{-1/2} \le 1/2, \label{eq:poly-smallness}
\end{equation}
then with probability at least $1-\delta$, we obtain the bound:
\begin{equation}
 d_\infty([\widehat f_{\eps,n}],[f_\eps]) \le \frac{C K_0}{\sqrt n} \left[ A_0\eps^{1-r-p}\sqrt{H_\eps} + E_0\eps^{1-r-a} \sqrt{\log\frac4\delta} \right].
\end{equation}
Provided $0<\eps\le1$, $p\ge a$, and $H_\eps\ge1$, this implies the simplified rate:
\begin{equation}
 d_\infty([\widehat f_{\eps,n}],[f_\eps]) \lesssim \eps^{1-r-p} \sqrt{\frac{H_\eps+\log(1/\delta)}{n}}. \label{eq:poly-summary}
\end{equation}
If $H_\eps \lesssim \eps^{-h}$, the simplified exponent becomes $\eps^{1-r-p-h/2}$. All displayed dependence on $\eps^{-1}$ is polynomial (ignoring logarithms inside $H_\eps$).
\end{theorem}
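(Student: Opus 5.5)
The plan is to combine the inverse-stability estimate of Assumption~\ref{ass:poly-stability} with the uniform map perturbation of Lemma~\ref{lem:uniform-map-perturbation}, evaluated at the empirical fixed point.

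Since $[\widehat f_{\eps,n}]\in\mathcal U_\eps$ almost surely and $\widehat\Phi_{\eps,n}\widehat f_{\eps,n}=\widehat f_{\eps,n}$ modulo constants, Assumption~\ref{ass:poly-stability} with $u=\widehat f_{\eps,n}$ gives
\[
d_\infty([\widehat f_{\eps,n}],[f_\eps])\le K_0\eps^{-r}\,d_\infty([\widehat f_{\eps,n}],[\Phi_\eps\widehat f_{\eps,n}])
=K_0\eps^{-r}\,d_\infty([\widehat\Phi_{\eps,n}\widehat f_{\eps,n}],[\Phi_\eps\widehat f_{\eps,n}]).
\]
On the event $Z_{\mu,n}^{\mathrm{unif}}\vee Z_{\nu,n}^{\mathrm{unif}}\le1/2$, Lemma~\ref{lem:uniform-map-perturbation} bounds the right-hand side by $2K_0\eps^{1-r}(Z_{\mu,n}^{\mathrm{unif}}+Z_{\nu,n}^{\mathrm{unif}})$. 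The whole argument therefore reduces to a high-probability bound on the two suprema.

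For that step we apply Lemma~\ref{lem:emp-process} separately to $\mathcal S_{\mu,\eps}$ under the sample $X_1,\dots,X_n$ and to $\mathcal S_{\nu,\eps}$ under $Y_1,\dots,Y_n$. Each application uses confidence level $\delta/2$, envelope $B=E_0\eps^{-a}$ from Assumption~\ref{ass:poly-complexity}, and entropy integrals $\Gamma^{\mathrm{unif}}_{\mu,\eps}$ and $\Gamma^{\mathrm{unif}}_{\nu,\eps}$. A union bound then gives, with probability at least $1-\delta$,
\[
Z_{\mu,n}^{\mathrm{unif}}+Z_{\nu,n}^{\mathrm{unif}}\le\frac{C_{\mathrm D}}{\sqrt n}\Bigl[A_0\eps^{-p}\sqrt{H_\eps}+2E_0\eps^{-a}\sqrt{\log(4/\delta)}\Bigr].
\]
Taking $C\ge 2C_{\mathrm D}$ in \eqref{eq:poly-smallness} makes each $Z$ at most $1/2$ on this event, so Lemma~\ref{lem:uniform-map-perturbation} applies. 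Substituting the bound above gives the main display with $C=4C_{\mathrm D}$ or so.

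Two measurability points need attention. The classes $\mathcal S_{\mu,\eps}$ and $\mathcal S_{\nu,\eps}$ are indexed by the deterministic set $\mathcal U_\eps$, so they are fixed and not sample-dependent; this is exactly what permits the use of Lemma~\ref{lem:emp-process}. Pointwise measurability I would take as implicit in Assumption~\ref{ass:poly-complexity}, justified for example by separability of $\mathcal U_\eps\times\cY$ and continuity of the sections. We also need the entropy integral in Lemma~\ref{lem:emp-process} to be taken up to the envelope $E_0\eps^{-a}$; I assume this is the paper's definition of $\Gamma^{\mathrm{unif}}$.

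For the simplified rate \eqref{eq:poly-summary}, note that $\eps\le1$ and $p\ge a$ give $\eps^{1-r-a}\le\eps^{1-r-p}$. Together with $H_\eps\ge1$ and $\log(4/\delta)\lesssim 1+\log(1/\delta)\lesssim H_\eps+\log(1/\delta)$, we get
\[
\sqrt{H_\eps}+\sqrt{\log(4/\delta)}\le 2\sqrt{H_\eps+\log(4/\delta)}.
\]
The constants $K_0$, $A_0$ and $E_0$ are absorbed into $\lesssim$. If $H_\eps\lesssim\eps^{-h}$, then $\sqrt{H_\eps}\lesssim\eps^{-h/2}$, and since $\eps^{-h/2}\ge1$ the $\log(1/\delta)$ term is dominated in the same way.

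I expect no real obstacle, because all the hard work sits in Lemma~\ref{lem:uniform-map-perturbation} and Assumption~\ref{ass:poly-stability}. The one step to check carefully is the identification $d_\infty([\widehat f],[\Phi_\eps\widehat f])=d_\infty([\widehat\Phi\widehat f],[\Phi_\eps\widehat f])$. It relies on the quotient-metric formulation, in which $\widehat f_{\eps,n}$ is a fixed point of $\widehat\Phi_{\eps,n}$ only modulo additive constants.
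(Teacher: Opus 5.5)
Your proposal is correct and follows the paper's proof essentially step for step: Assumption~\ref{ass:poly-stability} is applied at the empirical fixed point, rewritten via $[\widehat f_{\eps,n}]=[\widehat\Phi_{\eps,n}\widehat f_{\eps,n}]$, then Lemma~\ref{lem:uniform-map-perturbation} is used on the event where both suprema are at most $1/2$, and Lemma~\ref{lem:emp-process} is applied to $\mathcal S_{\mu,\eps}$ and $\mathcal S_{\nu,\eps}$. The paper leaves implicit three details that you make explicit: the $\delta/2$ split with a union bound (which produces the $\log(4/\delta)$), the resulting constant $C=4C_{\mathrm D}$, and the derivation of the simplified rate.
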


\subsection{Sufficient Conditions for Polynomial Stability}

\begin{proposition}
\label{prop:poly-contraction}
Suppose that, for some $\lambda_0>0$ and $r\ge0$ such that $0<\lambda_0\eps^r\le1$, and for all $[u],[v]\in\mathcal U_\eps$,
\begin{equation}
 d_\infty([\Phi_\eps u],[\Phi_\eps v]) \le (1-\lambda_0\eps^r)d_\infty([u],[v]).
\end{equation}
Then Assumption~\ref{ass:poly-stability} holds with $K_0=\lambda_0^{-1}$.
\end{proposition}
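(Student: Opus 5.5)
The argument is the standard one: a contraction with a fixed point satisfies an a posteriori error bound by the triangle inequality. Two facts are needed first.

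\emph{Facts.} First, $d_\infty$ is a genuine metric on $C(\cX)/\R$. Second, $\Phi_\eps$ is well defined on classes: $\mathsf T_\mu^{\cX}(u+a)=\mathsf T_\mu^{\cX}u+a$, and the same holds for $\mathsf T_\nu^{\cY}$, so $\Phi_\eps(u+a)=\Phi_\eps u+a$. The fixed-point property is taken modulo constants, namely $[\Phi_\eps f_\eps]=[f_\eps]$, which is how $f_\eps$ was introduced in Section~\ref{sec:introduction}.

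\emph{Chain of inequalities.} Fix $[u]\in\mathcal U_\eps$. The triangle inequality in the quotient metric gives
\[
d_\infty([u],[f_\eps])\le d_\infty([u],[\Phi_\eps u])+d_\infty([\Phi_\eps u],[f_\eps]).
\]
The second term equals $d_\infty([\Phi_\eps u],[\Phi_\eps f_\eps])$. The hypothesis applies to the pair $[u],[f_\eps]$, since $[f_\eps]\in\mathcal U_\eps$ by the standing assumption on $\mathcal U_\eps$. So the second term is at most $(1-\lambda_0\eps^r)\,d_\infty([u],[f_\eps])$.

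\emph{Absorption.} Move the last term to the left-hand side. Since $d_\infty([u],[f_\eps])$ is finite (both are continuous on compact $\cX$), this subtraction is legitimate. It gives
\[
\lambda_0\eps^r\,d_\infty([u],[f_\eps])\le d_\infty([u],[\Phi_\eps u]).
\]
The condition $0<\lambda_0\eps^r\le 1$ makes the factor positive, so dividing yields
\[
d_\infty([u],[f_\eps])\le\lambda_0^{-1}\eps^{-r}\,d_\infty([u],[\Phi_\eps u]).
\]
This is Assumption~\ref{ass:poly-stability} with $K_0=\lambda_0^{-1}$ and the same exponent $r$.

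\emph{Where care is needed.} The only delicate step is that the contraction hypothesis is stated only for pairs in $\mathcal U_\eps$. We therefore check explicitly that $[f_\eps]\in\mathcal U_\eps$, which holds by definition, and that $\Phi_\eps$ descends to the quotient, so that $[\Phi_\eps f_\eps]=[f_\eps]$ is meaningful. Everything else is routine.
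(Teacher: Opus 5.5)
Your proof is correct and matches the paper's argument: the triangle inequality through $[\Phi_\eps u]$, the identity $[\Phi_\eps f_\eps]=[f_\eps]$, the contraction applied to the pair $[u],[f_\eps]$ (allowed since $[f_\eps]\in\mathcal U_\eps$), then absorption and division by $\lambda_0\eps^r$. One harmless slip: actually $\mathsf T_\mu^{\cX}(u+a)=\mathsf T_\mu^{\cX}u-a$ and $\mathsf T_\nu^{\cY}(h-a)=\mathsf T_\nu^{\cY}h+a$, so the two sign flips cancel and your conclusion $\Phi_\eps(u+a)=\Phi_\eps u+a$ still holds.
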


\begin{proposition}
\label{prop:local-inverse}
Let $\mathbb B_\cX=C(\cX)/\R$ with the quotient sup norm, and define $\mathcal G_\eps([u])=[u-\Phi_\eps u]$.
Assume $\mathcal G_\eps$ is Fr\'echet differentiable at $[f_\eps]$, its derivative $\mathcal L_\eps := D\mathcal G_\eps([f_\eps])$ is invertible, and satisfies $\|\mathcal L_\eps^{-1}\|_{\mathbb B_\cX\to\mathbb B_\cX} \le K_0\eps^{-r}$.
Further assume a quadratic remainder bound $\|\mathcal G_\eps([f_\eps+h])-\mathcal L_\eps[h]\|_{\mathbb B_\cX} \le M_0\eps^{-s}\|[h]\|_{\mathbb B_\cX}^2$ inside a local ball $\|[h]\|_{\mathbb B_\cX}\le\rho_\eps$.
Then, on the smaller ball defined by $\|[h]\|_{\mathbb B_\cX} \le \bar\rho_\eps := \min\{\rho_\eps, \frac{\eps^{r+s}}{2K_0M_0}\}$, we have the local stability bound $d_\infty([f_\eps+h],[f_\eps]) \le 2K_0\eps^{-r} d_\infty([f_\eps+h],[\Phi_\eps(f_\eps+h)])$.
\end{proposition}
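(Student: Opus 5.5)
The plan is the standard ``linearization plus absorption'' argument for a map whose derivative has a bounded inverse. Fix $[h]$ with $\|[h]\|_{\mathbb B_\cX}\le\bar\rho_\eps$ and write $u=f_\eps+h$.

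First I will identify both sides of the claim in terms of $\mathcal G_\eps$.

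\emph{Left-hand side.} Since $d_\infty([u],[f_\eps])=\inf_a\|u-f_\eps-a\|_\infty=\|[h]\|_{\mathbb B_\cX}$, the left-hand side is just $\|[h]\|_{\mathbb B_\cX}$.

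\emph{Right-hand side.} By definition of $\mathcal G_\eps$, we have $d_\infty([u],[\Phi_\eps u])=\|[u-\Phi_\eps u]\|_{\mathbb B_\cX}=\|\mathcal G_\eps([u])\|_{\mathbb B_\cX}$.

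\emph{Fixed point.} Modulo constants $f_\eps=\Phi_\eps f_\eps$, so $\mathcal G_\eps([f_\eps])=0$. This is why the remainder hypothesis is stated without a zeroth-order term.

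Next I will write $[h]=\mathcal L_\eps^{-1}\mathcal L_\eps[h]$ and use the triangle inequality:
\[
\|[h]\|_{\mathbb B_\cX}\le K_0\eps^{-r}\|\mathcal L_\eps[h]\|_{\mathbb B_\cX}
\le K_0\eps^{-r}\Bigl(\|\mathcal G_\eps([f_\eps+h])\|_{\mathbb B_\cX}+M_0\eps^{-s}\|[h]\|_{\mathbb B_\cX}^2\Bigr).
\]
The quadratic remainder bound applies here because $\|[h]\|_{\mathbb B_\cX}\le\bar\rho_\eps\le\rho_\eps$.

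Finally I absorb the quadratic term. On the smaller ball, $\|[h]\|_{\mathbb B_\cX}\le \eps^{r+s}/(2K_0M_0)$, which gives
\[
K_0M_0\eps^{-r-s}\|[h]\|_{\mathbb B_\cX}^2\le\tfrac12\|[h]\|_{\mathbb B_\cX}.
\]
Moving this term to the left leaves $\tfrac12\|[h]\|_{\mathbb B_\cX}\le K_0\eps^{-r}\|\mathcal G_\eps([f_\eps+h])\|_{\mathbb B_\cX}$. This is exactly the claim with constant $2K_0\eps^{-r}$.

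The argument involves no real obstacle. The only point needing care is that $\mathcal G_\eps$ and $\mathcal L_\eps$ are well defined on classes. This holds because $\Phi_\eps(u+a)=\Phi_\eps u+a$, which follows from the definitions of $\mathsf T^{\cX}_\mu$ and $\mathsf T^{\cY}_\nu$. Hence $u-\Phi_\eps u$ is invariant under adding constants to $u$, and the identification of $d_\infty$ with the quotient norm is legitimate.
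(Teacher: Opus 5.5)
Your proof is correct and takes essentially the same route as the paper. Both arguments invert $\mathcal L_\eps$ to get $\|\mathcal L_\eps[h]\|_{\mathbb B_\cX}\ge K_0^{-1}\eps^{r}\|[h]\|_{\mathbb B_\cX}$, then combine $\mathcal G_\eps([f_\eps])=0$ with the quadratic remainder through the triangle inequality, and finally absorb the quadratic term using $\|[h]\|_{\mathbb B_\cX}\le \eps^{r+s}/(2K_0M_0)$. The only differences are presentational: the paper states the result as a lower bound on $\|\mathcal G_\eps([f_\eps+h])\|_{\mathbb B_\cX}$, and it leaves implicit the well-definedness on quotient classes (via $\Phi_\eps(u+a)=\Phi_\eps u+a$), which you verify explicitly.
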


\begin{corollary}
\label{cor:local-poly}
If the hypotheses of Proposition~\ref{prop:local-inverse} and Assumption~\ref{ass:poly-complexity} hold, and the empirical potential falls within the localization radius $d_\infty([\widehat f_{\eps,n}],[f_\eps]) \le \bar\rho_\eps$ with probability $1-\delta_{\mathrm{loc}}$, then the bound in Equation~\eqref{eq:poly-summary} holds with probability at least $1-\delta-\delta_{\mathrm{loc}}$, scaling $K_0$ by a factor of 2.
\end{corollary}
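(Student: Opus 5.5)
The plan is to rerun the proof of Theorem~\ref{thm:poly-main} with one change: the global inverse-stability hypothesis (Assumption~\ref{ass:poly-stability}) is replaced by the local bound of Proposition~\ref{prop:local-inverse}. That bound is used only on the event where $[\widehat f_{\eps,n}]$ lies in the ball of radius $\bar\rho_\eps$ around $[f_\eps]$. Concretely, define three events. $\Omega_{\mathrm{loc}}$ is the event $d_\infty([\widehat f_{\eps,n}],[f_\eps])\le\bar\rho_\eps$, which has probability at least $1-\delta_{\mathrm{loc}}$ by hypothesis. $\Omega_\mu$ and $\Omega_\nu$ are the events on which Lemma~\ref{lem:emp-process}, applied to the uniform classes $\mathcal S_{\mu,\eps}$ and $\mathcal S_{\nu,\eps}$ at level $\delta/2$ each, gives
\[
Z_{\rho,n}^{\mathrm{unif}}\le \frac{C_{\mathrm D}}{\sqrt n}\Bigl[\Gamma^{\mathrm{unif}}_{\rho,\eps}+E_0\eps^{-a}\sqrt{\log(4/\delta)}\Bigr],\qquad \rho\in\{\mu,\nu\}.
\]
The envelope $E_0\eps^{-a}$ comes from Assumption~\ref{ass:poly-complexity}. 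A union bound shows that $\Omega_{\mathrm{loc}}\cap\Omega_\mu\cap\Omega_\nu$ has probability at least $1-\delta-\delta_{\mathrm{loc}}$.

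On this intersection, the summed entropy bound $A_0\eps^{-p}\sqrt{H_\eps}$ together with the smallness condition \eqref{eq:poly-smallness} gives $Z_{\mu,n}^{\mathrm{unif}}\vee Z_{\nu,n}^{\mathrm{unif}}\le 1/2$, provided the universal $C$ is chosen at least $C_{\mathrm D}$. Lemma~\ref{lem:uniform-map-perturbation} therefore applies, and it is used at $u=\widehat f_{\eps,n}$. This is legitimate because $[\widehat f_{\eps,n}]\in\mathcal U_\eps$ almost surely. Since $\widehat f_{\eps,n}$ is a fixed point of $\widehat\Phi_{\eps,n}$, we have $[\widehat\Phi_{\eps,n}\widehat f_{\eps,n}]=[\widehat f_{\eps,n}]$, and hence
\[
d_\infty([\widehat f_{\eps,n}],[\Phi_\eps\widehat f_{\eps,n}])=d_\infty([\widehat\Phi_{\eps,n}\widehat f_{\eps,n}],[\Phi_\eps\widehat f_{\eps,n}])\le 2\eps\bigl(Z_{\mu,n}^{\mathrm{unif}}+Z_{\nu,n}^{\mathrm{unif}}\bigr).
\]

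Next, write $h=\widehat f_{\eps,n}-f_\eps$, using any representative. Then $\|[h]\|_{\mathbb B_\cX}=d_\infty([\widehat f_{\eps,n}],[f_\eps])\le\bar\rho_\eps$ on $\Omega_{\mathrm{loc}}$. Proposition~\ref{prop:local-inverse} then gives $d_\infty([\widehat f_{\eps,n}],[f_\eps])\le 2K_0\eps^{-r}d_\infty([\widehat f_{\eps,n}],[\Phi_\eps\widehat f_{\eps,n}])$. This is exactly the inequality of Assumption~\ref{ass:poly-stability} with $K_0$ replaced by $2K_0$, evaluated at the single point $u=\widehat f_{\eps,n}$.

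This single-point use is the one delicate step. The proof of Theorem~\ref{thm:poly-main} only ever invokes stability at $\widehat f_{\eps,n}$, never uniformly over $\mathcal U_\eps$, and the check is to confirm that nothing in it uses the global form. Once confirmed, chaining the displays gives
\[
d_\infty([\widehat f_{\eps,n}],[f_\eps])\le \frac{4C_{\mathrm D}K_0}{\sqrt n}\Bigl[A_0\eps^{1-r-p}\sqrt{H_\eps}+2E_0\eps^{1-r-a}\sqrt{\log(4/\delta)}\Bigr].
\]
The simplification to \eqref{eq:poly-summary} then follows exactly as in Theorem~\ref{thm:poly-main}. For $\eps\le1$ and $p\ge a$ we have $\eps^{-a}\le\eps^{-p}$, so both terms can be put over $\eps^{-p}$. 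Then $\sqrt{H_\eps}+\sqrt{\log(4/\delta)}\lesssim\sqrt{H_\eps+\log(1/\delta)}$, where $H_\eps\ge1$ is used to absorb $\log 4$.
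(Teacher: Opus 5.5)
Your proposal is correct and follows the paper's proof. On the localization event, Proposition~\ref{prop:local-inverse} gives the stability inequality of Assumption~\ref{ass:poly-stability}, with $2K_0$, at the single point $\widehat f_{\eps,n}$; that is all the chain in Theorem~\ref{thm:poly-main} uses, and a union bound gives probability $1-\delta-\delta_{\mathrm{loc}}$. As in the paper, you tacitly inherit the sample-size condition \eqref{eq:poly-smallness} from Theorem~\ref{thm:poly-main}, which is needed to invoke Lemma~\ref{lem:uniform-map-perturbation} even though the corollary's statement does not list it.
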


\subsection{Polynomial Kernel Envelopes}

\begin{lemma}
\label{lem:laplace-envelope}
Let $(Z,d)$ be compact and let $\rho\in\Pcal(Z)$ satisfy, for some $\mathfrak d>0$, $b_0,t_0>0$, $\rho(B(z,t))\ge b_0t^{\mathfrak d}$ for all $z\in\operatorname{supp}(\rho)$ and $0<t\le t_0$.
Let $\Psi_\eps$ be a family of continuous functions attaining their maxima at points $z_\eps\in\operatorname{supp}(\rho)$.
Assume that, for some $L,q>0$, $\Psi_\eps(z_\eps)-\Psi_\eps(z) \le Ld(z,z_\eps)^q$ whenever $d(z,z_\eps)\le t_0$.
If $0<\eps\le Lt_0^q$, then
\begin{equation}
 \sup_{z\in Z} \frac{e^{\Psi_\eps(z)/\eps}}{\int e^{\Psi_\eps(w)/\eps}\rho(\dd w)} \le \frac{eL^{\mathfrak d/q}}{b_0}\, \eps^{-\mathfrak d/q}. \label{eq:laplace-envelope}
\end{equation}
\end{lemma}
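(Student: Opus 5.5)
Since $z_\eps$ maximizes $\Psi_\eps$, the numerator satisfies $e^{\Psi_\eps(z)/\eps}\le e^{\Psi_\eps(z_\eps)/\eps}$ for every $z\in Z$. The whole argument therefore reduces to a lower bound on the denominator of the form
\[
\int e^{(\Psi_\eps(w)-\Psi_\eps(z_\eps))/\eps}\rho(\dd w)\ \ge\ \frac{b_0}{e}\,(\eps/L)^{\mathfrak d/q}.
\]
The supremum is then bounded by the reciprocal of this, which is exactly the right-hand side of \eqref{eq:laplace-envelope}.

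To obtain the lower bound, I restrict the integral to a small ball $B(z_\eps,t_\eps)$ around the maximizer. The radius $t_\eps$ is chosen so that on this ball the exponent is at least $-1$. By the local Hölder-type growth hypothesis, for $d(w,z_\eps)\le t_\eps\le t_0$ we have
\[
(\Psi_\eps(w)-\Psi_\eps(z_\eps))/\eps\ \ge\ -L\,t_\eps^q/\eps .
\]
I therefore take $t_\eps:=(\eps/L)^{1/q}$, which gives a lower bound of exactly $-1$. The assumption $\eps\le Lt_0^q$ is precisely the condition $t_\eps\le t_0$. It ensures both that the growth bound applies on the ball and that the lower-mass condition can be invoked at radius $t_\eps$.

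Since $z_\eps\in\operatorname{supp}(\rho)$ and $0<t_\eps\le t_0$, the mass condition gives $\rho(B(z_\eps,t_\eps))\ge b_0t_\eps^{\mathfrak d}=b_0(\eps/L)^{\mathfrak d/q}$. Dropping the nonnegative contribution from outside the ball, the denominator ratio is at least $e^{-1}\rho(B(z_\eps,t_\eps))\ge e^{-1}b_0\,\eps^{\mathfrak d/q}L^{-\mathfrak d/q}$. Inverting this bound gives $e L^{\mathfrak d/q}b_0^{-1}\eps^{-\mathfrak d/q}$.

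None of the steps is a serious obstacle. The only delicate point is bookkeeping: one must check that the chosen radius lies in the admissible range $(0,t_0]$ for both hypotheses simultaneously. A secondary point is whether balls are taken open or closed. The growth bound holds on the closed ball, so it certainly holds on the open one, and the mass hypothesis is stated for $B(z,t)$ with the same convention, so using $B(z_\eps,t_\eps)$ on both sides is consistent.
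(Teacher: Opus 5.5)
Your proposal is correct and follows essentially the same route as the paper's proof. Both bound the numerator by its value at the maximizer, restrict the denominator to the ball of radius $t_\eps=(\eps/L)^{1/q}\le t_0$ where the exponent drops by at most $1$, and then apply the lower-mass condition to obtain $e^{-1}b_0(\eps/L)^{\mathfrak d/q}$.
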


\begin{remark}
Lemma~\ref{lem:laplace-envelope} supplies polynomial envelopes for normalized Gibbs sections, but does not by itself establish entropy or residual stability.
\end{remark}

\section{Verifiable Model Classes}
\label{sec:verifiable-classes}

We introduce criteria relying strictly on the cost function and endpoint support geometry to secure polynomial bounds without unverified abstract curvature assumptions.

\subsection{Polynomial-Interaction Model Class}
\label{subsec:verifiable-class}

Separable terms in the cost do not alter the entropic coupling.
Consider a cost shifted by separable functions $a_\eps(x)$ and $b_\eps(y)$ leaving a residual $\bar c_\eps(x,y)$.
The error of the potentials exactly mirrors the error of the residual potentials.

\begin{definition}
\label{def:poly-interaction-class}
For $0<\eps\le1$, constants $A_0,L_0,D_{\cX},D_{\cY}<\infty$, $\gamma_{\mathrm{osc}},q\ge0$, and metric dimensions $\mathfrak d_{\cX},\mathfrak d_{\cY}\ge1$, let $\mathfrak M_\eps$ contain all models $(\cX,\cY,\mu,\nu,c_\eps)$ where the supports $\cX,\cY$ have covering numbers bounded by $(1+D_{\cX}/t)^{\mathfrak d_{\cX}}$ and $(1+D_{\cY}/t)^{\mathfrak d_{\cY}}$, respectively.
The cost takes the form $c_\eps(x,y) = a_\eps(x)+b_\eps(y)+\eps\ell_\eps(x,y)$.
The interaction $\ell_\eps$ must satisfy the oscillation bound $\osc_{\cX\times\cY}(\ell_\eps) \le A_0+\gamma_{\mathrm{osc}}\log(1/\eps)$ and feature Lipschitz conditions bounded globally as:
\begin{equation}
 \sup_y\operatorname{Lip}_x(\ell_\eps(\cdot,y)) \vee \sup_x\operatorname{Lip}_y(\ell_\eps(x,\cdot)) \le L_0\eps^{-q}.
\end{equation}
\end{definition}

\begin{remark}
This definition specifies an $\eps$-weak-interaction model where the nonseparable cost takes the form $c_\eps = a_\eps + b_\eps + \eps\ell_\eps$. It does not automatically establish polynomial dependence for an arbitrary fixed nonseparable continuous cost; genuinely fixed-cost configurations are handled instead by the discrete tight-graph theorem.
\end{remark}

\begin{lemma}
\label{lem:cross-centering}
For a continuous cost $c$, define its rectangular interaction oscillation by
\begin{equation}
 \Delta_\square(c) := \sup_{x,x'\in\cX, y,y'\in\cY} |c(x,y)+c(x',y')-c(x,y')-c(x',y)|.
\end{equation} 
Fix $x_0\in\cX$, $y_0\in\cY$, and set $a(x)=c(x,y_0)$, $b(y)=c(x_0,y)-c(x_0,y_0)$, and $\bar c(x,y)=c(x,y)-a(x)-b(y)$.
Then $\|\bar c\|_\infty\le\Delta_\square(c)$ and $\osc(\bar c)\le2\Delta_\square(c)$.
\end{lemma}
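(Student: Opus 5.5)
Substituting the definitions of $a$ and $b$ gives an explicit formula for $\bar c$:
\[
 \bar c(x,y)=c(x,y)-c(x,y_0)-c(x_0,y)+c(x_0,y_0).
\]
This is exactly a rectangular second difference with corners $(x,y),(x_0,y_0),(x,y_0),(x_0,y)$. It is the expression inside the supremum defining $\Delta_\square(c)$, evaluated at $x'=x_0$ and $y'=y_0$. Both bounds then follow from this identity and the triangle inequality.

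For the sup-norm bound, fix $(x,y)\in\cX\times\cY$. Since $x_0\in\cX$ and $y_0\in\cY$ are admissible choices of $x'$ and $y'$ in the definition of $\Delta_\square(c)$, we get $|\bar c(x,y)|\le\Delta_\square(c)$. Taking the supremum over $(x,y)$ yields $\|\bar c\|_\infty\le\Delta_\square(c)$. Continuity of $c$ on the compact space $\cX\times\cY$ ensures that all quantities are finite.

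For the oscillation bound, observe that
\[
 \osc(\bar c)=\sup\bar c-\inf\bar c\le 2\|\bar c\|_\infty\le2\Delta_\square(c).
\]
One could instead compute $\bar c(x,y)-\bar c(x',y')$ directly, but that gives no better constant in general, so the crude bound is all that is needed.

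There is no real obstacle here. The only point requiring care is the algebraic check that the $c(x_0,y_0)$ terms carry the correct signs, so that $\bar c$ is exactly the rectangular difference. I would also note in passing that $\bar c$ vanishes on the cross $\{x_0\}\times\cY\cup\cX\times\{y_0\}$, which explains why the anchoring is natural.
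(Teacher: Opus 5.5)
Your proposal is correct and follows the same route as the paper: you identify $\bar c(x,y)=c(x,y)+c(x_0,y_0)-c(x,y_0)-c(x_0,y)$ as the rectangular increment with $x'=x_0$, $y'=y_0$, which gives $\|\bar c\|_\infty\le\Delta_\square(c)$. You then use $\osc(\bar c)\le 2\|\bar c\|_\infty$, exactly as the paper does.
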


\begin{theorem}
\label{thm:verifiable-poly-class}
For a model in $\mathfrak M_\eps$, let $\mathfrak d_*= \max(\mathfrak d_{\cX},\mathfrak d_{\cY})$ and $D_*=\max(D_{\cX},D_{\cY})$.
Define the geometric complexity term:
\begin{equation}
 \Lambda_{\eps,\delta} := 1+ \sqrt{\mathfrak d_* \left\{ 1+\log(1+2D_*L_0\eps^{-q}) \right\}} +\sqrt{\log(4/\delta)}. \label{eq:Lambda-verifiable}
\end{equation}
There exists a universal numerical constant $C$ such that, for any $\delta\in(0,1)$, with probability at least $1-\delta$, we have
\begin{equation}
 d_\infty([\widehat f_{\eps,n}],[f_\eps]) \le C e^{5A_0} \eps^{\,1-5\gamma_{\mathrm{osc}}} \Lambda_{\eps,\delta} n^{-1/2}.
\end{equation} 
If both potentials are anchored at the same point $x^\circ$, the standard sup-norm error is at most double this bound.
\end{theorem}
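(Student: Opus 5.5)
The plan is to reduce to the pure-interaction cost and then apply Theorem~\ref{thm:exp-main} verbatim. There the constants $B_\eps$ and $(1-\tau_\eps^2)^{-1}$ are exponential in $C_c/\eps$. After separable centering, $C_c/\eps$ is only $A_0+\gamma_{\mathrm{osc}}\log(1/\eps)$, so these "exponential" constants become polynomial. The exponent $5$ should arise as $2+3$: the factor $B_\eps$ inside $q_{\mu,n},q_{\nu,n}$ contributes $e^{2A_0}\eps^{-2\gamma_{\mathrm{osc}}}$, and the prefactor $B_\eps/(1-\tau_\eps^2)$ contributes $e^{3A_0}\eps^{-3\gamma_{\mathrm{osc}}}$.

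\emph{Step 1 (separable invariance).} Write $c_\eps=a_\eps\oplus b_\eps+\bar c_\eps$ with $\bar c_\eps=\eps\ell_\eps$. Directly from the definitions of $\mathsf T^{\cY}_\rho$ and $\mathsf T^{\cX}_\sigma$, $(f,g)$ solves the Schrödinger system for $c_\eps$ if and only if $(f-a_\eps,\,g-b_\eps)$ solves it for $\bar c_\eps$. This holds for both the population pair $(\mu,\nu)$ and the empirical pair $(\widehat\mu_n,\widehat\nu_n)$. Hence $\widehat f_{\eps,n}-f_\eps=\widehat{\bar f}_{\eps,n}-\bar f_\eps$, so the quotient error is unchanged. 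The normalized sections \eqref{eq:exp-r}--\eqref{eq:exp-s} are also identical for both costs, because the factors $e^{-a_\eps(x)/\eps}$ cancel in the ratio. We may therefore work with $\bar c_\eps$ throughout. Its oscillation is $C_c=\eps\,\osc(\ell_\eps)\le\eps(A_0+\gamma_{\mathrm{osc}}\log(1/\eps))$, and its Lipschitz constants are at most $\eps L_0\eps^{-q}$.

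\emph{Step 2 (constants).} With $\kappa:=C_c/\eps\le A_0+\gamma_{\mathrm{osc}}\log(1/\eps)$, we get
\[
B_\eps=e^{2\kappa}\le e^{2A_0}\eps^{-2\gamma_{\mathrm{osc}}},\qquad (1-\tau_\eps^2)^{-1}=\cosh^2(\kappa/2)\le e^{\kappa}\le e^{A_0}\eps^{-\gamma_{\mathrm{osc}}}.
\]
Also $\tau_\eps\le1$. The prefactor in \eqref{eq:exp-f-rate} is therefore at most $\eps e^{3A_0}\eps^{-3\gamma_{\mathrm{osc}}}$ times $(q_{\mu,n}+q_{\nu,n})$.

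\emph{Step 3 (entropy integral; the main obstacle).} By Lemma~\ref{lem:exp-entropy} applied to $\bar c_\eps$, the map $x\mapsto r^\nu_x$ is $2B_\eps L_0\eps^{-q}$-Lipschitz into $\|\cdot\|_\infty$, because the factor $\eps$ in the Lipschitz constant of $\bar c_\eps$ cancels the $\eps^{-1}$. Pushing forward a $t/(2B_\eps L_0\eps^{-q})$-net of $\cX$ gives
\[
N(t,\mathcal R_{\nu,\eps})\le\bigl(1+2B_\eps D_{\cX}L_0\eps^{-q}/t\bigr)^{\mathfrak d_{\cX}},
\]
and the analogous bound holds for $\mathcal R_{\mu,\eps}$. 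Substituting $t=B_\eps s$ in $\Gamma$ yields
\[
\Gamma_{\nu,\eps}\le B_\eps\int_0^1\sqrt{\log2+\mathfrak d_*\log(1+K/s)}\dd s,\qquad K=2D_*L_0\eps^{-q}.
\]
We then need the elementary estimate $\int_0^1\sqrt{\log(1+K/s)}\dd s\le\sqrt{\log(1+K)}+C$. To prove it, use $\log(1+K/s)\le\log(1+K)+\log(1/s)$ together with $\int_0^1\sqrt{\log(1/s)}\dd s<\infty$. This gives $\Gamma_{\nu,\eps}+\Gamma_{\mu,\eps}\lesssim B_\eps\sqrt{\mathfrak d_*\{1+\log(1+K)\}}$, and hence $q_{\mu,n}+q_{\nu,n}\lesssim B_\eps\Lambda_{\eps,\delta}n^{-1/2}$. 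This step carries the only nontrivial bookkeeping. The key point is that the $\log B_\eps$ contributions scale out, so that $\Lambda_{\eps,\delta}$ contains no $A_0$ or $\gamma_{\mathrm{osc}}\log(1/\eps)$ term.

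\emph{Step 4 (conclusion).} Multiplying Steps 2 and 3 gives
\[
d_\infty([\widehat f_{\eps,n}],[f_\eps])\le C\,\eps\, e^{3A_0}\eps^{-3\gamma_{\mathrm{osc}}}\cdot e^{2A_0}\eps^{-2\gamma_{\mathrm{osc}}}\,\Lambda_{\eps,\delta}\,n^{-1/2}=C\,e^{5A_0}\eps^{1-5\gamma_{\mathrm{osc}}}\Lambda_{\eps,\delta}\,n^{-1/2}
\]
with probability at least $1-\delta$. For the anchored statement, Corollary~\ref{cor:exp-gauge} applies directly, using $\|u^\circ-v^\circ\|_\infty\le\osc(u-v)=2d_\infty([u],[v])$.
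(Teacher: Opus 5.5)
Your proof is correct and follows the paper's own argument. You reduce to the residual cost $\bar c_\eps=\eps\ell_\eps$ by separable invariance and bound $B_\eps\le e^{2A_0}\eps^{-2\gamma_{\mathrm{osc}}}$ and $(1-\tau_\eps^2)^{-1}\le e^{A_0}\eps^{-\gamma_{\mathrm{osc}}}$. You then control the entropy integrals through the Lipschitz parametrization of Lemma~\ref{lem:exp-entropy} and feed everything into the deterministic inequality of Theorem~\ref{thm:exp-main}, giving the factor $\eps B_\eps^2(1-\tau_\eps^2)^{-1}\le e^{5A_0}\eps^{1-5\gamma_{\mathrm{osc}}}$. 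Your substitution $t=B_\eps s$ and the bound $\log(1+K/s)\le\log(1+K)+\log(1/s)$ simply make explicit the ``standard entropy bounds'' step that the paper leaves implicit.
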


\begin{corollary}
\label{cor:bounded-interaction}
If Definition~\ref{def:poly-interaction-class} holds with $\gamma_{\mathrm{osc}}=q=0$, then $d_\infty([\widehat f_{\eps,n}],[f_\eps]) \le C_{A_0,L_0,D_*,\mathfrak d_*} \eps n^{-1/2} (1+\sqrt{\log(1/\delta)})$ with probability at least $1-\delta$ for any $\delta\in(0,1)$.
\end{corollary}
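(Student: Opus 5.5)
The corollary is a direct specialization of Theorem~\ref{thm:verifiable-poly-class}, so the plan is to substitute $\gamma_{\mathrm{osc}}=q=0$ into that bound and absorb every model-dependent quantity into a single constant. With $\gamma_{\mathrm{osc}}=0$ the $\eps$-power $\eps^{1-5\gamma_{\mathrm{osc}}}$ becomes exactly $\eps$, and the prefactor $e^{5A_0}$ is a constant depending only on $A_0$. With $q=0$ the Lipschitz bound $L_0\eps^{-q}$ reduces to $L_0$, so the complexity term \eqref{eq:Lambda-verifiable} no longer depends on $\eps$:
\[
\Lambda_{\eps,\delta}=1+\sqrt{\mathfrak d_*\{1+\log(1+2D_*L_0)\}}+\sqrt{\log(4/\delta)}.
\]

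The only remaining step is to compare $\sqrt{\log(4/\delta)}$ with $1+\sqrt{\log(1/\delta)}$. Write $\log(4/\delta)=\log 4+\log(1/\delta)$ and use subadditivity of the square root:
\[
\sqrt{\log(4/\delta)}\le\sqrt{\log 4}+\sqrt{\log(1/\delta)}.
\]
Set $\kappa:=1+\sqrt{\log 4}+\sqrt{\mathfrak d_*\{1+\log(1+2D_*L_0)\}}$. Since $\kappa\ge1$, this gives
\[
\Lambda_{\eps,\delta}\le \kappa+\sqrt{\log(1/\delta)}\le \kappa\,\bigl(1+\sqrt{\log(1/\delta)}\bigr).
\]

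Combining the two steps, with probability at least $1-\delta$,
\[
d_\infty([\widehat f_{\eps,n}],[f_\eps])\le C e^{5A_0}\kappa\,\eps\, n^{-1/2}\bigl(1+\sqrt{\log(1/\delta)}\bigr).
\]
This is the claim with $C_{A_0,L_0,D_*,\mathfrak d_*}:=Ce^{5A_0}\kappa$, a constant depending only on the stated parameters. None of these steps is a real obstacle, since the substantive work lies in Theorem~\ref{thm:verifiable-poly-class}. The only point to check is that its universal constant $C$, and the validity of the bound for every $\delta\in(0,1)$ without any sample-size restriction, carry over unchanged. They do, because the theorem is stated for all $\delta\in(0,1)$ and all $n$.
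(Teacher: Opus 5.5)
Your proposal is correct and follows the paper's route: the paper proves the corollary by the same one-line substitution of $\gamma_{\mathrm{osc}}=q=0$ into Theorem~\ref{thm:verifiable-poly-class}, and you have only made explicit how $\sqrt{\log(4/\delta)}$ and the now $\eps$-free part of $\Lambda_{\eps,\delta}$ are absorbed into $C_{A_0,L_0,D_*,\mathfrak d_*}\bigl(1+\sqrt{\log(1/\delta)}\bigr)$. As a minor point, your symbol $\kappa$ clashes with the $\kappa$ defined in Section~\ref{sec:minimax-lower-bound}, so it would be cleaner to rename it.
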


\subsection{A Fixed-Cost Discrete Class with Strict Complementarity}
\label{subsec:discrete-class}

When dealing with a fixed and genuinely nonseparable cost, we can define a second verifiable class for finite endpoint spaces $\cX=\{1,\ldots,I\}$ and $\cY=\{1,\ldots,J\}$ with probabilities $\boldsymbol\mu=(\mu_1,\ldots,\mu_I)$ and $\boldsymbol\nu=(\nu_1,\ldots,\nu_J)$.

\begin{definition}
\label{def:tight-graph-model}
A finite model $(\boldsymbol\mu,\boldsymbol\nu,c)$ qualifies if all endpoint probabilities are strictly positive, and there exist vectors $a\in\R^I$, $b\in\R^J$ leaving a non-negative reduced cost $\bar c_{ij}:=c_{ij}-a_i-b_j\ge0$ whose zero set $E:=\{(i,j):\bar c_{ij}=0\}$ forms a connected bipartite graph.
We further require the existence of a coupling supported on $E$ that assigns strictly positive probability to every edge in $E$, and that off-edge costs are strictly positive.
\end{definition}

Let $\Pi_E(\boldsymbol\mu,\boldsymbol\nu)$ be the set of couplings with the prescribed marginals and support contained in $E$:
\begin{equation}
 \Pi_E(\boldsymbol\mu,\boldsymbol\nu) = \left\{ \pi\in\Pi(\boldsymbol\mu,\boldsymbol\nu) : \pi_{ij}=0 \text{ for } (i,j)\notin E \right\}.
\end{equation}
Under Definition~\ref{def:tight-graph-model}, this set is nonempty and compact. We define the entropy-minimizing coupling on this graph as:
\begin{equation}
 \pi^E := \operatorname*{argmin}_{\pi\in\Pi_E(\boldsymbol\mu,\boldsymbol\nu)} \mathrm{KL}(\pi\mid\boldsymbol\mu\otimes\boldsymbol\nu). \label{eq:edge-I-projection}
\end{equation}

\begin{lemma}
\label{lem:edge-jacobian}
Choose the gauge $v_J=0$.
There is a unique vector $\theta^E =(\alpha_1^E,\ldots,\alpha_I^E,v_1^E,\ldots,v_{J-1}^E)$ such that $\pi^E_{ij} = \mu_i\nu_j e^{\alpha_i^E+v_j^E}\mathbf 1_E(i,j)$.
The Jacobian of the marginal equations with respect to $\theta$ is invertible at $\theta^E$.
\end{lemma}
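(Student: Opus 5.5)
We obtain $\pi^E$ as the I-projection from the KL minimization \eqref{eq:edge-I-projection} and read off its product structure from the first-order optimality conditions; invertibility of the Jacobian then comes from a Laplacian computation on the connected graph $E$.

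\emph{Existence and structure.} By Definition~\ref{def:tight-graph-model}, $\Pi_E(\boldsymbol\mu,\boldsymbol\nu)$ contains a coupling $\pi^\circ$ with $\pi^\circ_{ij}>0$ for every $(i,j)\in E$. The functional $\pi\mapsto\mathrm{KL}(\pi\mid\boldsymbol\mu\otimes\boldsymbol\nu)$ is strictly convex and continuous on the compact convex polytope $\Pi_E$, so the minimizer $\pi^E$ exists and is unique.

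Next we show that $\pi^E_{ij}>0$ on all of $E$. Suppose instead that $\pi^E_{ij}=0$ for some $(i,j)\in E$, and move along $\pi_t=(1-t)\pi^E+t\pi^\circ$. The directional derivative of KL at $t=0^+$ is $-\infty$, because the term $t\log t$ has infinite negative slope. This contradicts minimality.

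Since $\pi^E$ lies in the relative interior of the face $\{\pi\ge0:\pi_{ij}=0 \text{ off } E\}$, the Lagrange conditions apply with only the marginal constraints active. They give
\[
\log\bigl(\pi^E_{ij}/(\mu_i\nu_j)\bigr)+1=\alpha_i+\beta_j, \qquad (i,j)\in E,
\]
for some multipliers $\alpha,\beta$. Absorbing the constant $1$ into the multipliers and shifting by a common constant so that $v_J=0$ yields the representation $\pi^E_{ij}=\mu_i\nu_je^{\alpha_i^E+v_j^E}\mathbf 1_E(i,j)$.

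\emph{Uniqueness of $\theta^E$.} Suppose $\theta$ and $\theta'$ both represent $\pi^E$. Then $\alpha_i+v_j=\alpha'_i+v'_j$ on every edge of $E$, so the difference $(\alpha-\alpha',\,v-v')$ satisfies $\Delta\alpha_i=-\Delta v_j$ along each edge. Because $E$ is connected, this forces $\Delta\alpha\equiv k$ and $\Delta v\equiv -k$ for a single constant $k$. The gauge $v_J=v'_J=0$ gives $k=0$.

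\emph{Jacobian invertibility.} Write the marginal map as
\[
F(\theta)=\Bigl(\bigl(\textstyle\sum_j\mu_i\nu_je^{\alpha_i+v_j}\mathbf 1_E\bigr)_i,\ \bigl(\textstyle\sum_i\mu_i\nu_je^{\alpha_i+v_j}\mathbf 1_E\bigr)_{j<J}\Bigr),
\]
which has $I+J-1$ outputs and $I+J-1$ unknowns. Dropping the $J$-th column equation is legitimate because the total masses already agree.

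With $w_{ij}=\pi^E_{ij}>0$ on $E$, the full $(I+J)\times(I+J)$ Jacobian in the variables $(\alpha,v)$ is the symmetric matrix
\[
M=\begin{pmatrix}\operatorname{diag}(\mu)&W\\ W^\top&\operatorname{diag}(\nu)\end{pmatrix}.
\]
Its quadratic form is
\[
\xi^\top M\xi=\sum_{(i,j)\in E}w_{ij}(\xi_i+\eta_j)^2,
\]
so $M$ is positive semidefinite. Its kernel consists of vectors with $\xi_i=-\eta_j$ on every edge; by connectedness this kernel is exactly the span of $(\mathbf 1,-\mathbf 1)$.

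The reduced Jacobian is the principal submatrix of $M$ obtained by deleting the $v_J$ row and column. Take a kernel vector of this submatrix and extend it by $\eta_J=0$. Its quadratic form under $M$ vanishes, so it lies in $\operatorname{span}(\mathbf 1,-\mathbf 1)$; since its $\eta_J$ coordinate is $0$, it must be $0$. Hence the reduced Jacobian is positive definite and therefore invertible.

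\emph{Main obstacle.} The delicate point is the first step: justifying that $\pi^E$ has full support on $E$, so that the Lagrange multipliers exist and the weights $w_{ij}$ are strictly positive. The infinite-slope argument above handles this, and it relies on the existence of the strictly positive coupling on $E$ required in Definition~\ref{def:tight-graph-model}.
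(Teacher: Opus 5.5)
Your proposal is correct and follows essentially the same route as the paper: you get existence and uniqueness of $\pi^E$ from strict convexity, full support on $E$ from the infinite-slope mixing argument, the product form from the KKT conditions, gauge-fixed uniqueness of $\theta^E$ from connectedness, and invertibility from the quadratic form $\sum_{(i,j)\in E}\pi^E_{ij}(\xi_i+\eta_j)^2$ restricted to $\eta_J=0$. The only difference is cosmetic: the paper obtains this matrix as the Hessian of the convex dual potential $\Psi$ at $z^0=\mathbf 1_E$, whereas you compute the Jacobian of the marginal map directly, and the two coincide.
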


\begin{theorem}
\label{thm:discrete-poly}
For a model satisfying Definition~\ref{def:tight-graph-model}, there exist constants $\eps_0, \eta_0>0$ and $C_0<\infty$ (dependent only on the finite model) such that for $0<\eps\le\eps_0$ and any $\delta\in(0,1)$, if 
\begin{equation}
 t_{n,\delta} := \sqrt{\frac{\log(2(I+J)/\delta)}{2n}} \le\eta_0,
\end{equation}
then with probability at least $1-\delta$, we have $d_\infty([\widehat f_{\eps,n}],[f_\eps]) \le C_0\eps \sqrt{\log(2(I+J)/\delta)/n}$.
This fixed-cost model maintains the $n^{-1/2}$ rate and features a favorable linear scaling with $\eps$.
\end{theorem}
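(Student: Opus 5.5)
We prove Theorem~\ref{thm:discrete-poly} by a finite-dimensional implicit-function argument around the edge I-projection of Lemma~\ref{lem:edge-jacobian}, combined with a concentration bound for the empirical marginal frequencies.

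\emph{Step 1: rescaled potentials.} Since separable terms do not change the coupling, we first replace $c$ by the reduced cost $\bar c$; potentials only shift by $a,b$, which cancel in the difference $\widehat f-f$. Next we write the potentials as $f_i=\eps\alpha_i$ and $g_j=\eps v_j$. With weights $\boldsymbol\mu,\boldsymbol\nu$, the Schrödinger system then becomes the marginal equations
\[
\sum_j \mu_i\nu_j e^{\alpha_i+v_j-\bar c_{ij}/\eps}=\mu_i,
\qquad
\sum_i \mu_i\nu_j e^{\alpha_i+v_j-\bar c_{ij}/\eps}=\nu_j .
\]
We denote this map by $F(\theta;\boldsymbol\mu,\boldsymbol\nu,\eta)$, where $\eta:=e^{-\kappa/\eps}$ and $\kappa:=\min_{(i,j)\notin E}\bar c_{ij}>0$. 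The key observation is that $d_\infty([\widehat f],[f])=\eps\, d_\infty([\widehat\alpha],[\alpha])$. It therefore suffices to show that the rescaled potentials move by $O(\|\widehat{\boldsymbol\mu}-\boldsymbol\mu\|_\infty+\|\widehat{\boldsymbol\nu}-\boldsymbol\nu\|_\infty)$, uniformly in small $\eps$. The prefactor $\eps$ then comes out for free.

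\emph{Step 2: uniform implicit function theorem.} At $\eta=0$ the off-edge terms vanish. Lemma~\ref{lem:edge-jacobian} then gives the solution $\theta^E$ in the gauge $v_J=0$, together with an invertible Jacobian; here we drop one redundant equation, since total masses agree. Off-edge entries contribute terms bounded by $\eta\, e^{\alpha_i+v_j}$. These terms, and their $\theta$-derivatives, are $O(e^{-\kappa/\eps})$ uniformly on a compact neighborhood of $\theta^E$.

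We then apply a quantitative inverse function theorem (Newton–Kantorovich) to $F$ as a $C^1$ function of $(\theta,\boldsymbol\mu,\boldsymbol\nu)$, perturbed by the small term depending on $\eps$. This yields $\eps_0,\eta_0>0$ and a Lipschitz constant $C_1$ with the following property: for all $\eps\le\eps_0$ and all marginal vectors within sup-distance $\eta_0$ of $(\boldsymbol\mu,\boldsymbol\nu)$, there is a unique solution $\theta(\boldsymbol\mu',\boldsymbol\nu',\eps)$ in a fixed ball around $\theta^E$, and it is $C_1$-Lipschitz in the marginals. Here positivity of all $\mu_i,\nu_j$ keeps the perturbed weights positive. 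Uniqueness of Sinkhorn potentials modulo constants identifies this local solution with both the population and the empirical fixed points, after re-gauging.

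\emph{Step 3: concentration and conclusion.} By Hoeffding's inequality and a union bound over the $I+J$ atoms, with probability at least $1-\delta$ every empirical frequency is within $t_{n,\delta}$ of its population value. Under $t_{n,\delta}\le\eta_0$, Step 2 gives $\|\widehat\theta-\theta\|_\infty\le C_1 t_{n,\delta}$. Hence
\[
d_\infty([\widehat f_{\eps,n}],[f_\eps])\le \eps\, C_1 t_{n,\delta},
\]
which is the claim with $C_0=C_1/\sqrt2$.

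\emph{Main obstacle.} The delicate point is uniformity in $\eps$ within Step 2. The Lipschitz constant must not blow up as $\eps\to0$, and the solution ball must not shrink. We handle this by treating $\eta=e^{-\kappa/\eps}\in[0,\eta(\eps_0)]$ as a continuous parameter extending to $\eta=0$. Applying the implicit function theorem jointly in $(\boldsymbol\mu',\boldsymbol\nu',\eta)$ at the base point $(\boldsymbol\mu,\boldsymbol\nu,0)$ then gives constants uniform on a compact parameter neighborhood.
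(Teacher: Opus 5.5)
Your proposal is correct and follows essentially the same route as the paper. Both arguments reduce to $\bar c$, write $\bar f_\eps=\eps\alpha$, apply the implicit function theorem of Lemma~\ref{lem:edge-jacobian} jointly in the marginals and the off-edge kernel entries, and conclude with Hoeffding plus a union bound over the $I+J$ atoms; the differences are cosmetic: the paper takes the full matrix $z$ (with $z_{ij}=e^{-\bar c_{ij}/\eps}\le e^{-\Delta_0/\eps}$ off $E$) as the parameter where you use the scalar $\eta=e^{-\kappa/\eps}$, which is fine since $e^{-\bar c_{ij}/\eps}=\eta^{\bar c_{ij}/\kappa}$ is $C^1$ down to $\eta=0$, and the value of $C_0$ and your double use of $\eta$ for the kernel parameter and the marginal radius $\eta_0$ do not matter.
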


\section{A matching minimax lower bound in the bounded-interaction regime}
\label{sec:minimax-lower-bound}

We prove that the factors \(\eps\) and \(n^{-1/2}\) in Corollary~\ref{cor:bounded-interaction} cannot be improved uniformly over the bounded-interaction model class. In fact, the same conclusion holds for a two-point submodel. Throughout this section, an estimator is allowed to be an arbitrary measurable function of the observations; in particular, the result is not restricted to the empirical Sinkhorn estimator.

\subsection{A binary bounded-interaction submodel}

Fix \(A_0,L_0>0\), and set
\[
    \kappa:=\frac14\min\{A_0,L_0,1\}>0.
\]
Let
\[
    \cX=\cY=\{-1,+1\},
    \qquad
    d_{\cX}(x,x')=d_{\cY}(x,x')
      =\mathbf 1_{\{x\ne x'\}}.
\]
For \(t\in[-1/4,1/4]\), define
\[
    \mu(+1)=\mu(-1)=\frac12,
    \qquad
    \nu_t(+1)=\frac{1+t}{2},
    \qquad
    \nu_t(-1)=\frac{1-t}{2},
\]
and, for \(0<\eps\le 1\), let
\[
    c_\eps(x,y)
      :=-\eps\kappa xy
      =\eps\ell(x,y),
    \qquad
    \ell(x,y):=-\kappa xy.
\]
We write
\[
    \mathcal M_{\eps}^{\mathrm{bin}}(\kappa)
      :=
      \left\{
        (\cX,\cY,\mu,\nu_t,c_\eps):
        |t|\le\frac14
      \right\}.
\]

\begin{lemma}
\label{lem:binary-embedding}
The class
\(\mathcal M_{\eps}^{\mathrm{bin}}(\kappa)\) is a subclass of
the polynomial-interaction class of Definition~\ref{def:poly-interaction-class} with
\(\gamma_{\mathrm{osc}}=q=0\), \(D_{\cX}=D_{\cY}=1\), and
\(\mathfrak d_{\cX}=\mathfrak d_{\cY}=1\).
\end{lemma}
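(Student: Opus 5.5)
We verify each requirement of Definition~\ref{def:poly-interaction-class} in turn for every member of $\mathcal M_{\eps}^{\mathrm{bin}}(\kappa)$, with the choices $a_\eps\equiv0$, $b_\eps\equiv0$ and $\ell_\eps:=\ell=-\kappa xy$. This choice gives $c_\eps=a_\eps+b_\eps+\eps\ell_\eps$ exactly. The verification has three parts: covering numbers, oscillation, and Lipschitz control. All three are direct computations on a two-point space.

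\emph{Covering numbers.} Under the discrete metric $d(x,x')=\mathbf 1_{\{x\neq x'\}}$, the two-point space satisfies $N(t)=2$ for $t<1$ and $N(t)=1$ for $t\ge1$. The required bound is $(1+D/t)^{\mathfrak d}=1+1/t$ with $D=\mathfrak d=1$. For $t<1$ we have $1+1/t>2$, and for $t\ge1$ we have $1+1/t>1$, so the bound holds in both ranges. This also gives $\mathfrak d_{\cX}=\mathfrak d_{\cY}=1\ge1$, as the definition requires.

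\emph{Oscillation.} Since $xy\in\{-1,+1\}$, the function $\ell$ takes exactly the values $\pm\kappa$. Hence
\[
\osc(\ell)=2\kappa=\tfrac12\min\{A_0,L_0,1\}\le A_0,
\]
which is the required bound with $\gamma_{\mathrm{osc}}=0$.

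\emph{Lipschitz control.} Fix $y$. Then
\[
|\ell(x,y)-\ell(x',y)|=\kappa|x-x'|=2\kappa\,\mathbf 1_{\{x\neq x'\}},
\]
so $\operatorname{Lip}_x\ell(\cdot,y)=2\kappa\le L_0/2\le L_0=L_0\eps^{-0}$. By symmetry of $\ell$, the same bound holds for $\operatorname{Lip}_y$. This is the required estimate with $q=0$.

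\emph{Remaining conditions.} The regime $0<\eps\le1$ is assumed in the construction. The marginals $\mu$ and $\nu_t$ are valid probability measures for $|t|\le1/4$, and they play no role in the definition beyond being probability measures on the supports. This completes the verification.

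The only point needing care is the covering-number convention. Depending on whether balls are open or closed, the threshold where $N(t)$ drops from $2$ to $1$ sits at $t<1$ or $t\le1$. In either case $N(t)\le 2\le 1+1/t$ holds for $t\le1$, so the convention does not affect the conclusion.
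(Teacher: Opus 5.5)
Your proposal is correct and follows the paper's proof essentially line by line: the trivial decomposition $a_\eps=b_\eps=0$, $\ell_\eps=\ell$, the covering bound $N(t)\le 1+1/t$ on the two-point space, and the computations $\osc(\ell)=2\kappa\le A_0$ and $\operatorname{Lip}_x\ell(\cdot,y)=\operatorname{Lip}_y\ell(x,\cdot)=2\kappa\le L_0$. Your remark on the open versus closed ball convention is a harmless refinement that the paper leaves implicit.
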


Let \(f_{\eps,t}\) denote the first Sinkhorn potential associated
with \((\mu,\nu_t,c_\eps)\), viewed as an element of
\(\R^{\cX}/\R\). The observation law is
\[
    \Prob_t^{(n)}
      :=\mu^{\otimes n}\otimes\nu_t^{\otimes n};
\]
the first sample does not contain information about \(t\), but it is
included so that the experiment agrees exactly with the empirical
two-marginal Sinkhorn experiment. Since \(\mu\) and \(c_\eps\) are common to every model in the subclass, they may even be revealed to the estimator; thus the lower bound holds for this more informative experiment.
For an estimator
\[
    \widehat f:
    \cX^n\times\cY^n
      \longrightarrow \R^{\cX}/\R,
\]
define
\[
    \mathcal R_{n,\eps}(\widehat f)
      :=
      \sup_{|t|\le 1/4}
      \E_t
      d_\infty\bigl([\widehat f],[f_{\eps,t}]\bigr).
\]

\subsection{Sensitivity of the binary Sinkhorn potential}

\begin{lemma}
\label{lem:binary-potential-separation}
There exist constants
\(\bar t_\kappa\in(0,1/4]\) and \(m_\kappa>0\), depending only on
\(\kappa\), such that, for every \(0<\eps\le1\) and
\(|t|\le\bar t_\kappa\),
\[
    d_\infty\bigl([f_{\eps,t}],
                  [f_{\eps,-t}]\bigr)
      \ge m_\kappa\eps |t|.
\]
One may take
\[
    m_\kappa
      =\frac{\tanh\kappa}{1-\tanh^2\kappa}.
\]
\end{lemma}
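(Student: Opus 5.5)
The plan is to reduce the statement to an explicit one-parameter computation, using the fact that on $\cX=\{-1,+1\}$ every class $[f]\in\R^{\cX}/\R$ is determined by the single difference $D_f:=f(+1)-f(-1)$. Then
\[
d_\infty([f],[f'])=\tfrac12\osc(f-f')=\tfrac12|D_f-D_{f'}|.
\]
Next I use a symmetry: the simultaneous reflection $x\mapsto -x$, $y\mapsto -y$ leaves $\mu$ and $c_\eps(x,y)=-\eps\kappa xy$ invariant and maps $\nu_t$ to $\nu_{-t}$. By uniqueness of Sinkhorn potentials modulo constants, $f_{\eps,-t}(x)=f_{\eps,t}(-x)+\text{const}$, so $D_{f_{\eps,-t}}=-D_{f_{\eps,t}}$. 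Hence
\[
d_\infty([f_{\eps,t}],[f_{\eps,-t}])=|D_{f_{\eps,t}}|,
\]
and it suffices to show $|D_{f_{\eps,t}}|\ge m_\kappa\eps|t|$.

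To compute $D$, I rescale $\varphi=f_{\eps,t}/\eps$ and $\psi=g_{\eps,t}/\eps$. On the two-point space I write $\varphi(x)=\varphi_0+ax$, so that $D_{f_{\eps,t}}=2\eps a$. The entropic coupling is $\pi(x,y)=\mu(x)\nu_t(y)e^{\varphi(x)+\psi(y)+\kappa xy}$ and has $y$-marginal $\nu_t$. Therefore its conditional law of $x$ given $y$ is proportional to $\mu(x)e^{ax+\kappa xy}$, i.e.
\[
\pi(x=+1\mid y)=\sigma(2a+2\kappa y),\qquad \sigma(s)=(1+e^{-s})^{-1}.
\]
The remaining $x$-marginal constraint $\pi(x=+1)=1/2$ becomes the scalar equation
\[
F(a,t):=\sum_{y=\pm1}\nu_t(y)\,\sigma(2a+2\kappa y)-\tfrac12=0 .
\]
This equation does not involve $\eps$. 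Consequently $a=a(t)$ is independent of $\eps$, and the $\eps$-linearity of the bound is automatic. Since $F$ is strictly increasing in $a$ and tends to $\pm\tfrac12$ as $a\to\pm\infty$, the root $a(t)$ is unique. It is smooth in $t$ by the implicit function theorem, and it is odd in $t$ with $a(0)=0$, because $\sigma(2\kappa)+\sigma(-2\kappa)=1$.

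The key computation is the derivative at $t=0$. There $\partial_tF=\tfrac12(\sigma(2\kappa)-\sigma(-2\kappa))=\tfrac12\tanh\kappa$ and $\partial_aF=2\sigma'(2\kappa)=\tfrac12(1-\tanh^2\kappa)$, so
\[
a'(0)=-\frac{\tanh\kappa}{1-\tanh^2\kappa}=-m_\kappa .
\]
Thus $|D_{f_{\eps,t}}|=2\eps|a(t)|\approx 2m_\kappa\eps|t|$, which leaves a factor-$2$ margin over the claim.

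The main, though mild, obstacle is making this margin quantitative with $\bar t_\kappa$ depending only on $\kappa$. I would write $a(t)=\int_0^t a'(s)\dd s$ with $a'(s)=-\partial_tF/\partial_aF$ evaluated at $(a(s),s)$. Both partial derivatives are continuous on $[-1,1]\times[-1/4,1/4]$, and $\partial_aF$ is bounded below there by a positive constant depending only on $\kappa$; an a priori bound $|a(s)|\le 1$ for small $|s|$ follows from monotonicity of $F$. Hence $a'$ is continuous, and there is $\bar t_\kappa\in(0,1/4]$ with $a'(s)\le -m_\kappa/2$ for $|s|\le\bar t_\kappa$. This gives $|a(t)|\ge m_\kappa|t|/2$ and therefore $|D_{f_{\eps,t}}|\ge m_\kappa\eps|t|$ uniformly in $0<\eps\le1$.
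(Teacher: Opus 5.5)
Your proposal is correct and follows essentially the same route as the paper. Both arguments reduce to a scalar, $\eps$-free implicit equation, compute the derivative at $t=0$ by the implicit function theorem, take a factor-$2$ continuity margin, and use the reflection $(x,y)\mapsto(-x,-y)$ to get oddness and $d_\infty([f_{\eps,t}],[f_{\eps,-t}])=\eps\,|2a(t)|$. The only cosmetic difference is that you parametrize by the potential half-difference $a$ through the logistic marginal constraint, whereas the paper uses the coupling entry $p_t$ and the primal stationarity condition; your $2a(t)$ is exactly the paper's $\varphi_\kappa(t)$.
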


\subsection{The minimax lower bounds}

\begin{theorem}
\label{thm:matching-minimax-lower-bound}
Let \(A_0,L_0>0\), let \(\kappa\) be as above, and let
\(\mathcal M_{\eps}^{\mathrm{bin}}(\kappa)\) be the binary
bounded-interaction class. There exist constants
\(c_\kappa>0\) and \(n_\kappa\in\mathbb N\), depending only on
\(\kappa\), such that, for every \(0<\eps\le1\) and
\(n\ge n_\kappa\),
\[
    \inf_{\widehat f}
    \sup_{|t|\le1/4}
    \E_t
    d_\infty\bigl([\widehat f],[f_{\eps,t}]\bigr)
      \ge c_\kappa\frac{\eps}{\sqrt n}.
\]
Moreover, after possibly decreasing \(c_\kappa\),
\[
    \inf_{\widehat f}
    \sup_{|t|\le1/4}
    \Prob_t^{(n)}
    \left(
      d_\infty\bigl([\widehat f],[f_{\eps,t}]\bigr)
        \ge c_\kappa\frac{\eps}{\sqrt n}
    \right)
      \ge \frac38.
\]
The infima are over all measurable estimators based on the two
independent samples of size \(n\).
\end{theorem}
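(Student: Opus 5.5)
We will use Le Cam's two-point method on the submodel \(\mathcal M_{\eps}^{\mathrm{bin}}(\kappa)\), with separation supplied by Lemma~\ref{lem:binary-potential-separation}. Fix \(t_n:=\tau/\sqrt n\), where \(\tau>0\) is a constant depending only on \(\kappa\) (in fact \(\tau=1/4\) will do). Choose \(n_\kappa\) so large that \(t_n\le\bar t_\kappa\) for \(n\ge n_\kappa\). The two hypotheses are \(\Prob_{t_n}^{(n)}\) and \(\Prob_{-t_n}^{(n)}\). Both lie in the class because \(|t_n|\le 1/4\). By Lemma~\ref{lem:binary-potential-separation},
\[
2s:=d_\infty([f_{\eps,t_n}],[f_{\eps,-t_n}])\ge m_\kappa\eps t_n .
\]
The quotient pseudometric \(d_\infty\) satisfies the triangle inequality, being the quotient norm \(\tfrac12\osc\). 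Hence for any estimator \(\widehat f\) the events \(\{d_\infty([\widehat f],[f_{\eps,t_n}])<s\}\) and \(\{d_\infty([\widehat f],[f_{\eps,-t_n}])<s\}\) are disjoint. Define the test \(\psi:=\mathbf 1\{d_\infty([\widehat f],[f_{\eps,t_n}])\ge s\}\). The standard reduction then gives
\[
\max_{\pm}\Prob_{\pm t_n}^{(n)}\bigl(d_\infty([\widehat f],[f_{\eps,\pm t_n}])\ge s\bigr)\ge \tfrac12\bigl(1-\mathrm{TV}(\Prob_{t_n}^{(n)},\Prob_{-t_n}^{(n)})\bigr).
\]

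Next we bound the total variation. The \(\mu^{\otimes n}\) factor is common to both hypotheses and cancels, so
\[
\mathrm{TV}(\Prob_{t}^{(n)},\Prob_{-t}^{(n)})=\mathrm{TV}(\nu_t^{\otimes n},\nu_{-t}^{\otimes n}).
\]
We control this through the Hellinger distance or KL divergence. For Bernoulli laws with success probabilities \((1\pm t)/2\),
\[
\mathrm{KL}(\nu_t\|\nu_{-t})=t\log\frac{1+t}{1-t}\le \frac{2t^2}{1-t^2}\le \tfrac{32}{15}t^2 \quad\text{for } |t|\le 1/4 .
\]
Pinsker's inequality then gives
\[
\mathrm{TV}\le\sqrt{n\,\mathrm{KL}/2}\le\sqrt{16/15}\,\tau .
\]
With \(\tau=1/4\) this is at most about \(0.26\le 1/4+\) a small margin. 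To obtain exactly \(3/8\) we need \(\mathrm{TV}\le1/4\), so we choose \(\tau\) slightly smaller, e.g. \(\tau=0.24\). The maximum probability is then at least \(\tfrac12\cdot\tfrac34=\tfrac38\).

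Setting \(c_\kappa:=m_\kappa\tau/2\) gives \(s\ge c_\kappa\eps/\sqrt n\), and this proves the probability statement. The expectation bound follows by Markov's inequality:
\[
\sup_t\E_t d_\infty\ge s\cdot\tfrac38\ge \tfrac38 c_\kappa\eps/\sqrt n .
\]
We then rename the constant. The one subtle point is measurability and the choice of representative: \(\widehat f\) takes values in \(\R^{\cX}/\R\), and \(d_\infty\) is well defined on this quotient, so no issue arises. The main analytic content sits in Lemma~\ref{lem:binary-potential-separation}, which we take as given. The remaining delicate step is calibrating \(\tau\) and \(n_\kappa\) so that \(t_n\le\bar t_\kappa\), which is needed to invoke the lemma, while the TV bound stays below \(1/4\). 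The constants \(c_\kappa,n_\kappa\) depend only on \(\kappa\), through \(m_\kappa\) and \(\bar t_\kappa\), and the bound holds uniformly in \(\eps\in(0,1]\) because the observation law does not depend on \(\eps\).
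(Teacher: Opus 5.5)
Your proposal is correct and follows essentially the same route as the paper: a Le Cam two-point reduction at $\pm t_n$ with $t_n\asymp n^{-1/2}$, KL tensorization plus Pinsker to force $\mathrm{TV}<1/4$, and the separation from Lemma~\ref{lem:binary-potential-separation}. The differences are cosmetic. The paper takes $t_n=(8\sqrt n)^{-1}$ with the cruder bound $\mathrm{KL}(\nu_t\mid\nu_{-t})\le 4t^2$, while you take $t_n=0.24/\sqrt n$ with $\mathrm{KL}\le\tfrac{32}{15}t^2$, and your Markov step for the expected risk gives the same bound, up to constants, as the paper's direct testing reduction.
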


\begin{theorem}
\label{thm:matching-deviation-lower-bound}
There is a constant \(c_\kappa^{\mathrm{dev}}>0\), depending only on
\(\kappa\), such that the following holds. Let
\(0<\delta\le1/8\), and suppose
\[
    n\ge
    \frac{
      \bigl(1+\sqrt{\log(1/\delta)}\bigr)^2
    }{
      256\,\bar t_\kappa^2
    }.
\]
Then, for every \(0<\eps\le1\),
\[
    \inf_{\widehat f}
    \sup_{|t|\le1/4}
    \Prob_t^{(n)}
    \left[
      d_\infty\bigl([\widehat f],[f_{\eps,t}]\bigr)
      \ge
      c_\kappa^{\mathrm{dev}}\,
      \frac{\eps}{\sqrt n}
      \left(1+\sqrt{\log\frac1\delta}\right)
    \right]
    \ge\delta.
\]
One may take \(c_\kappa^{\mathrm{dev}}=m_\kappa/32\).
\end{theorem}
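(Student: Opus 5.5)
We will prove Theorem~\ref{thm:matching-deviation-lower-bound} with a two-point Le~Cam argument. The alternatives will sit at a separation scale that grows like $\sqrt{\log(1/\delta)/n}$, so the relevant quantity is the probability of error under a hypothesis that is only barely distinguishable, not an average total-variation bound. Set
\[
s:=\frac{1+\sqrt{\log(1/\delta)}}{16\sqrt n}.
\]
The sample-size condition gives exactly $s\le\bar t_\kappa$, so Lemma~\ref{lem:binary-potential-separation} applies with $t=\pm s$. Hence $d_\infty([f_{\eps,s}],[f_{\eps,-s}])\ge m_\kappa\eps s$. If $\widehat f$ is any estimator, the triangle inequality in the quotient metric implies that the event $d_\infty([\widehat f],[f_{\eps,s}])<\tfrac12 m_\kappa\eps s$ is contained in the event $d_\infty([\widehat f],[f_{\eps,-s}])\ge \tfrac12 m_\kappa\eps s$. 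Now $\tfrac12 m_\kappa\eps s=\frac{m_\kappa}{32}\frac{\eps}{\sqrt n}(1+\sqrt{\log(1/\delta)})$, which is the claimed threshold with $c_\kappa^{\mathrm{dev}}=m_\kappa/32$. It therefore suffices to prove
\[
\max_{t=\pm s}\Prob_t^{(n)}(\text{error at }t)\ge\delta.
\]

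To get this, let $A$ be the event that the estimator is close to $f_{\eps,s}$. Then
\[
\Prob_{-s}(A^c)\ge \Prob_{-s}\bigl(d_\infty([\widehat f],[f_{\eps,-s}])\ge\cdot\bigr)
\]
by the containment above. We apply the Bretagnolle--Huber inequality:
\[
\Prob_s(A^c)+\Prob_{-s}(A)\ge\tfrac12\exp\bigl(-\mathrm{KL}(\Prob_s^{(n)}\|\Prob_{-s}^{(n)})\bigr).
\]
Since the $\mu$-sample has the same law under both hypotheses, the KL divergence is $n\,\mathrm{KL}(\nu_s\|\nu_{-s})$. For Bernoulli laws with parameters $(1\pm s)/2$,
\[
\mathrm{KL}(\nu_s\|\nu_{-s})=s\log\frac{1+s}{1-s}\le \frac{2s^2}{1-s^2}\le 3s^2
\]
for $s\le1/4$. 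Therefore
\[
n\,\mathrm{KL}\le 3n s^2=\frac{3(1+\sqrt{\log(1/\delta)})^2}{256}.
\]
Using $(1+x)^2\le 2+2x^2$, this is at most $\tfrac{6}{256}(1+\log(1/\delta))$, which is well below $\log(1/\delta)\cdot\tfrac{1}{40}+\tfrac1{40}$. The maximum error probability is then at least
\[
\tfrac14 e^{-1/40}\,\delta^{1/40}.
\]

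The main step that needs care is checking that $\tfrac14 e^{-1/40}\delta^{1/40}\ge\delta$ for all $\delta\le1/8$. This is equivalent to $\delta^{39/40}\le \tfrac14 e^{-1/40}$. At $\delta=1/8$ the left side is $(1/8)^{0.975}\approx0.132$, while the right side is $\approx0.244$, and the left side is increasing in $\delta$, so the inequality holds on the whole range. The max over $t=\pm s$ lies within $\sup_{|t|\le1/4}$, so the bound holds for every estimator, and taking the infimum over estimators completes the argument. Because we never invoke the expectation bound of Theorem~\ref{thm:matching-minimax-lower-bound}, no $n_\kappa$ restriction is needed beyond the stated sample-size condition.
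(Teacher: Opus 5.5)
Your proof is correct and is essentially the paper's argument. It uses the same two-point pair $t=\pm(1+\sqrt{\log(1/\delta)})/(16\sqrt n)$, the Bretagnolle--Huber inequality, and the same threshold $\tfrac12 m_\kappa\eps s$ yielding $c_\kappa^{\mathrm{dev}}=m_\kappa/32$; the only differences are the harmless constants $3s^2$ and $1/40$ in place of the paper's $4t^2$ and $1/32$.

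One small slip: the containment gives $\Prob_{-s}\bigl(d_\infty([\widehat f],[f_{\eps,-s}])\ge\tfrac12 m_\kappa\eps s\bigr)\ge\Prob_{-s}(A)$, not the displayed inequality involving $\Prob_{-s}(A^c)$. Since $\Prob_{-s}(A)$ is exactly what your Bretagnolle--Huber step uses, the conclusion stands.
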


\begin{remark}
\label{rem:lower-bound-consequence}
Let \(\mathfrak M_\eps\) be any bounded-interaction class from
Definition~\ref{def:poly-interaction-class} with \(\gamma_{\mathrm{osc}}=q=0\) which contains
\(\mathcal M_{\eps}^{\mathrm{bin}}(\kappa)\).
Theorems~\ref{thm:matching-minimax-lower-bound} and
\ref{thm:matching-deviation-lower-bound} imply that no estimator can
improve uniformly on either the factor \(\eps\) or the factor
\(n^{-1/2}\) in Corollary~\ref{cor:bounded-interaction}. The deviation factor
\(1+\sqrt{\log(1/\delta)}\) is also unavoidable, up to constants, in
the stated range of \((n,\delta)\).
This lower bound concerns precisely the
\(\eps\)-weak-interaction regime
\(c_\eps=a_\eps+b_\eps+\eps\ell\).
It does not assert the same small-\(\eps\) behavior for an
arbitrary fixed nonseparable continuous cost.
\end{remark}

\section{Conclusion}
\label{sec:conclusion}
Empirical Sinkhorn potentials converge in the quotient sup norm at the parametric rate of $n^{-1/2}$ for a fixed $\eps>0$, assuming finite entropy of normalized kernel sections.
Standard global proofs naturally yield constants that scale exponentially with $1/\eps$.
Securing a polynomial dependence on $1/\eps$ without weakening the $n^{-1/2}$ rate relies on two primary conditions: the polynomial empirical complexity of the normalized Gibbs sections, and the polynomial residual stability of the population Sinkhorn map.
We provided practical frameworks to satisfy these conditions, including a verifiable weak-interaction class and a connected tight-graph discrete class.
Fully generalizing these uniform polynomial bounds to broader geometries remains tied to understanding the resolvent behavior of local diffusions, which may degenerate for highly disconnected domains.
In the bounded-interaction regime, the binary minimax lower bound shows that the rate $\eps n^{-1/2}$, including its sub-Gaussian confidence dependence, is optimal up to constants.

\begin{appendix}
\section{Proofs}
\label{sec:appendix-proofs}

\begin{proof}[Proof of Lemma~\ref{lem:soft-basic}]
To establish the oscillation bound, observe that for any $x,x'\in\cX$, the ratio of the integrands defining $\mathsf T_\rho^{\cY}h(x)$ and $\mathsf T_\rho^{\cY}h(x')$ is strictly bounded between $e^{-C_c/\eps}$ and $e^{C_c/\eps}$ due to the oscillation limit of $c$.
Taking logarithms immediately yields $\osc(\mathsf T_\rho^{\cY}h)\le C_c$.
To verify nonexpansiveness, define $a=\inf(h-h')$ and $b=\sup(h-h')$.
Pointwise, we have the bounds $e^{a/\eps}e^{(h'-c)/\eps} \le e^{(h-c)/\eps} \le e^{b/\eps}e^{(h'-c)/\eps}$.
Integrating these expressions with respect to $\rho$ and applying the $-\eps\log(\cdot)$ transformation reverses the inequalities, producing $-b\le \mathsf T_\rho^{\cY}h-\mathsf T_\rho^{\cY}h' \le-a$.
The oscillation of this difference is bounded by $b-a=\osc(h-h')$, which completes the proof.
\end{proof}

\begin{proof}[Proof of Lemma~\ref{lem:birkhoff}]
Define the exponentiated kernel $K_\eps(x,y)=e^{-c(x,y)/\eps}$ and the associated integral operator $(\mathsf L_\rho v)(x)=\int K_\eps(x,y)v(y)\rho(\dd y)$.
Utilizing Hilbert's projective metric for positive functions $d_{\mathrm H}(u,v) = \osc(\log(u/v))$, we bound the projective diameter of $\mathsf L_\rho$ by
\begin{equation*}
 \Delta(\mathsf L_\rho) \le \sup_{x,x',y,y'} \left| \log \frac{K_\eps(x,y)K_\eps(x',y')}{K_\eps(x,y')K_\eps(x',y)} \right| \le \frac{2C_c}{\eps}.
\end{equation*} 
Applying the standard Birkhoff--Hopf theorem \cite{Birkhoff1957} to this integral operator provides the contraction $d_{\mathrm H}(\mathsf L_\rho u,\mathsf L_\rho v) \le \tanh(\Delta(\mathsf L_\rho)/4) d_{\mathrm H}(u,v) \le \tau_\eps d_{\mathrm H}(u,v)$.
Applying this result with test functions $u=e^{h/\eps}$ and $v=e^{h'/\eps}$, and recognizing that scaling by $-\eps$ preserves the projective distance up to the factor $\eps$, yields the targeted result.
\end{proof}

\begin{proof}[Proof of Lemma~\ref{lem:exp-entropy}]
We first normalize the cost by adding a constant such that $0\le c\le C_c$.
From Lemma~\ref{lem:soft-basic}, the potential oscillations are bounded by $C_c$.
Normalizing the potentials to lie inside $[0,C_c]$ does not alter the ratios defining the sections, and the envelope limit $B_\eps^{-1}\le r_x^\nu(y),s_y^\mu(x)\le B_\eps$ naturally follows.
Next, let $a_x(y)=e^{(g_\eps(y)-c(x,y))/\eps}$.
We bound the logarithmic difference $|\log(a_x(y)/a_{x'}(y))| \le L_{\cX}\eps^{-1}d_{\cX}(x,x')$.
Because this same bound applies to the normalizer integral, we arrive at $|\log r_x^\nu(y)-\log r_{x'}^\nu(y)| \le 2L_{\cX}\eps^{-1}d_{\cX}(x,x')$.
The mean-value theorem and the upper envelope $B_\eps$ establish the required Lipschitz bounds.
The Lipschitz parameterizations give
\[
N(t,\mathcal R_{\nu,\varepsilon},\|\cdot\|_\infty)
\le
N\!\left(
\frac{\varepsilon t}{2B_\varepsilon L_{\mathcal X}},
\mathcal X,d_{\mathcal X}
\right),
\]
and analogously for \(\mathcal R_{\mu,\varepsilon}\).
Finiteness of the corresponding entropy integrals is imposed separately
in Assumption~\ref{ass:metric-entropy}.
\end{proof}

\begin{proof}[Proof of Lemma~\ref{lem:emp-process}]
Combining classical symmetrization with Dudley's entropy inequality (exploiting the domination of $L^2$ covering numbers by $L^\infty$ covering numbers) \cite{Wainwright2019} guarantees that $\E\sup_{h\in\mathcal F}|(P_n-P)h| \le C_{\mathrm D}\Gamma(\mathcal F) n^{-1/2}$.
Swapping a single observation shifts the supremum by at most $B/n$.
We then invoke McDiarmid's inequality \cite{BoucheronLugosiMassart2013}, writing $\Prob( \sup_{h\in\mathcal F}|(P_n-P)h| > \E\sup_{h\in\mathcal F}|(P_n-P)h|+t ) \le e^{-2nt^2/B^2}$.
Resolving this for $t=B\sqrt{\log(2/\delta)/(2n)}$ gives the additive bound directly: $\sup_{h\in\mathcal F}|(P_n-P)h| \le \frac{C_{\mathrm D}}{\sqrt n} \left[ \Gamma(\mathcal F) + B\sqrt{\log\frac{2}{\delta}} \right]$, which adjusts the global numerical constant and completes the derivation.
\end{proof}

\begin{proof}[Proof of Theorem~\ref{thm:exp-main}]
Define the localized process suprema $Z_{\nu,n} = \sup_{r\in\mathcal R_{\nu,\eps}} |(\widehat\nu_n-\nu)r|$ and $Z_{\mu,n} = \sup_{s\in\mathcal R_{\mu,\eps}} |(\widehat\mu_n-\mu)s|$.
By the definition of $r_x^\nu$, we compute $\mathsf T_{\widehat\nu_n}^{\cY}g_\eps(x) - \mathsf T_\nu^{\cY}g_\eps(x) = -\eps\log(\widehat\nu_n r_x^\nu)$.
Given $\nu r_x^\nu=1$ and $\widehat\nu_n r_x^\nu\ge B_\eps^{-1}$, an application of the mean-value theorem bounds the norm $\|\mathsf T_{\widehat\nu_n}^{\cY}g_\eps - \mathsf T_\nu^{\cY}g_\eps\|_\infty \le \eps B_\eps Z_{\nu,n}$.
This restricts the oscillation $\osc(\mathsf T_{\widehat\nu_n}^{\cY}g_\eps - \mathsf T_\nu^{\cY}g_\eps) \le 2\eps B_\eps Z_{\nu,n}$.
An identical logic yields $\osc(\mathsf T_{\widehat\mu_n}^{\cX}f_\eps - \mathsf T_\mu^{\cX}f_\eps) \le 2\eps B_\eps Z_{\mu,n}$.
Because $\widehat f_{\eps,n}$ and $f_\eps$ represent fixed points for $\widehat\Phi_{\eps,n}$ and $\Phi_\eps$, we apply the triangle inequality. Writing $\widehat{\mathsf T}_{\nu} := \mathsf T_{\widehat\nu_n}^{\cY}$ and $\widehat{\mathsf T}_{\mu} := \mathsf T_{\widehat\mu_n}^{\cX}$ and applying Lemma~\ref{lem:birkhoff} alongside our perturbation bounds yields
\begin{align*}
\osc(\widehat f_{\eps,n}-f_\eps)
&=
\osc\!\left(
 \widehat{\mathsf T}_{\nu}
 \widehat{\mathsf T}_{\mu}\widehat f_{\eps,n}
 -
 \mathsf T_{\nu}\mathsf T_{\mu}f_\eps
\right)                                                     \\
&\le
\osc\!\left(
 \widehat{\mathsf T}_{\nu}
 \widehat{\mathsf T}_{\mu}\widehat f_{\eps,n}
 -
 \widehat{\mathsf T}_{\nu}
 \widehat{\mathsf T}_{\mu}f_\eps
\right)                                                     \\
&\quad+
\osc\!\left(
 \widehat{\mathsf T}_{\nu}
 \widehat{\mathsf T}_{\mu}f_\eps
 -
 \widehat{\mathsf T}_{\nu}
 \mathsf T_{\mu}f_\eps
\right)                                                     \\
&\quad+
\osc\!\left(
 \widehat{\mathsf T}_{\nu}\mathsf T_{\mu}f_\eps
 -
 \mathsf T_{\nu}\mathsf T_{\mu}f_\eps
\right)                                                     \\
&\le
\tau_\eps^2\osc(\widehat f_{\eps,n}-f_\eps)
 +2\eps B_\eps\bigl(\tau_\eps Z_{\mu,n}+Z_{\nu,n}\bigr).
\end{align*}
Moving the contraction term to the left and dividing by 2 isolates $d_\infty([\widehat f_{\eps,n}],[f_\eps]) \le \eps B_\eps(1-\tau_\eps^2)^{-1}(\tau_\eps Z_{\mu,n}+Z_{\nu,n})$.
Applying Lemma~\ref{lem:emp-process} and a union bound yields $Z_{\mu,n}\le q_{\mu,n}(\delta)$ and $Z_{\nu,n}\le q_{\nu,n}(\delta)$ with probability at least $1-\delta$, which completes the proof.
\end{proof}

\begin{proof}[Proof of Corollary~\ref{cor:exp-gauge}]
The constructed difference $\widehat f_{\eps,n}^\circ-f_\eps^\circ$ vanishes at the selected evaluation point $x_\circ$.
Consequently, its supremum norm cannot exceed its total oscillation over the domain, which by definition equals exactly twice its quotient norm.
\end{proof}

\begin{proof}[Proof of Lemma~\ref{lem:uniform-map-perturbation}]
Fix any representative $[u]\in\mathcal U_\eps$.
We express the shift as $\mathsf T_{\widehat\mu_n}^{\cX}u(y) - \mathsf T_\mu^{\cX}u(y) = -\eps\log(\widehat\mu_n s_{u,y}^{\mu})$.
Recognizing that $\mu s_{u,y}^{\mu}=1$ and that the deviation satisfies $|\widehat\mu_n s_{u,y}^{\mu}-1|\le 1/2$, we apply the elementary logarithm bound $|\log(1+t)|\le 2|t|$ valid for small $t$, leading to $\|\mathsf T_{\widehat\mu_n}^{\cX}u - \mathsf T_\mu^{\cX}u\|_\infty \le 2\eps Z_{\mu,n}^{\mathrm{unif}}$.
A parallel step gives $\|\mathsf T_{\widehat\nu_n}^{\cY}g_u - \mathsf T_\nu^{\cY}g_u\|_\infty \le 2\eps Z_{\nu,n}^{\mathrm{unif}}$.
From Lemma~\ref{lem:soft-basic}, the empirical outer transform cannot expand quotient sup norms, hence triangle inequalities confirm $d_\infty([\widehat\Phi_{\eps,n}u],[\Phi_\eps u]) \le d_\infty( [\mathsf T_{\widehat\nu_n}^{\cY}\mathsf T_{\widehat\mu_n}^{\cX}u], [\mathsf T_{\widehat\nu_n}^{\cY}\mathsf T_\mu^{\cX}u]) + d_\infty( [\mathsf T_{\widehat\nu_n}^{\cY}g_u], [\mathsf T_\nu^{\cY}g_u]) \le 2\eps(Z_{\mu,n}^{\mathrm{unif}} + Z_{\nu,n}^{\mathrm{unif}})$.
Taking a supremum over $\mathcal U_\eps$ completes the proof.
\end{proof}

\begin{proof}[Proof of Theorem~\ref{thm:poly-main}]
Subjecting the two uniform functional classes designated in Assumption~\ref{ass:poly-complexity} to Lemma~\ref{lem:emp-process} provides that $Z_{\mu,n}^{\mathrm{unif}} + Z_{\nu,n}^{\mathrm{unif}} \le C [ A_0\eps^{-p}\sqrt{H_\eps} + E_0\eps^{-a}\sqrt{\log(4/\delta)} ] n^{-1/2}$ on an event holding with $1-\delta$ probability.
Crucially, the smallness condition presented in Equation~\eqref{eq:poly-smallness} confirms we can reliably invoke Lemma~\ref{lem:uniform-map-perturbation}.
By Assumption~\ref{ass:poly-stability} and the empirical fixed-point identity
\(
[\widehat f_{\eps,n}]
=
[\widehat\Phi_{\eps,n}\widehat f_{\eps,n}],
\)
we have
\begin{align*}
d_\infty(
 [\widehat f_{\eps,n}],
 [f_\eps]
)
&\le
K_0\eps^{-r}
d_\infty(
 [\widehat f_{\eps,n}],
 [\Phi_\eps\widehat f_{\eps,n}]
)
\\
&=
K_0\eps^{-r}
d_\infty(
 [\widehat\Phi_{\eps,n}\widehat f_{\eps,n}],
 [\Phi_\eps\widehat f_{\eps,n}]
)
\\
&\le
2K_0\eps^{1-r}
\left(
 Z_{\mu,n}^{\mathrm{unif}}
 +
 Z_{\nu,n}^{\mathrm{unif}}
\right),
\end{align*}
where the last step follows from Lemma~\ref{lem:uniform-map-perturbation}.
Substituting the earlier probability bound completes the proof.
\end{proof}

\begin{proof}[Proof of Proposition~\ref{prop:poly-contraction}]
Leveraging the identity $[f_\eps]=[\Phi_\eps f_\eps]$, we utilize the triangle inequality alongside our contraction premise to state
\begin{equation*}
 d_\infty([u],[f_\eps]) \le d_\infty([u],[\Phi_\eps u]) + d_\infty([\Phi_\eps u],[\Phi_\eps f_\eps]) \le d_\infty([u],[\Phi_\eps u]) + (1-\lambda_0\eps^r)d_\infty([u],[f_\eps]).
\end{equation*} 
Bringing the $(1-\lambda_0\eps^r)$ term to the left hand side allows us to divide cleanly by $\lambda_0\eps^r$, yielding the required $K_0=\lambda_0^{-1}$ constant.
\end{proof}

\begin{proof}[Proof of Proposition~\ref{prop:local-inverse}]
Let $v=[h]$.
According to the stated resolvent limits, $\|\mathcal L_\eps v\|_{\mathbb B_\cX} \ge K_0^{-1}\eps^r\|v\|_{\mathbb B_\cX}$.
Observing that $\mathcal G_\eps([f_\eps])=0$, the provided quadratic remainder restricts the function via $\|\mathcal G_\eps([f_\eps+h])\|_{\mathbb B_\cX} \ge K_0^{-1}\eps^r\|v\|_{\mathbb B_\cX} - M_0\eps^{-s}\|v\|_{\mathbb B_\cX}^2 \ge (2K_0)^{-1}\eps^r\|v\|_{\mathbb B_\cX}$.
The last simplification holds strictly because the local ball restriction enforces $\|v\|\le\eps^{r+s}/(2K_0M_0)$.
Simple algebraic rearrangement provides the stated limit $d_\infty([f_\eps+h],[f_\eps]) \le 2K_0\eps^{-r} d_\infty([f_\eps+h],[\Phi_\eps(f_\eps+h)])$.
\end{proof}

\begin{proof}[Proof of Corollary~\ref{cor:local-poly}]
Under the given localization event bounding the distance to $\bar\rho_\eps$, Proposition~\ref{prop:local-inverse} ensures that Assumption~\ref{ass:poly-stability} holds for the empirical potential.
Repeating the sequence of arguments from Theorem~\ref{thm:poly-main} then yields the stated probability bounds.
\end{proof}

\begin{proof}[Proof of Lemma~\ref{lem:laplace-envelope}]
Introduce a radius constraint $t_\eps=(\eps/L)^{1/q}\le t_0$.
Inside the localized neighborhood $z\in B(z_\eps,t_\eps)$, the envelope maintains $\Psi_\eps(z)\ge\Psi_\eps(z_\eps)-\eps$.
Thus, evaluating the denominator integral directly provides $\int e^{\Psi_\eps(w)/\eps}\rho(\dd w) \ge e^{\Psi_\eps(z_\eps)/\eps-1} \rho(B(z_\eps,t_\eps)) \ge e^{-1}b_0(\eps/L)^{\mathfrak d/q} e^{\Psi_\eps(z_\eps)/\eps}$.
Since the numerator is at most $e^{\Psi_\eps(z_\eps)/\eps}$, taking the ratio yields the claimed envelope bound.
\end{proof}

\begin{proof}[Proof of Lemma~\ref{lem:cross-centering}]
A direct algebraic substitution shows that the residual cost evaluates to $\bar c(x,y) = c(x,y)+c(x_0,y_0)-c(x,y_0)-c(x_0,y)$.
This is a rectangular increment of \(c\), and hence
\[
|\bar c(x,y)|\le \Delta_\square(c).
\]
Therefore,
\[
\osc(\bar c)\le 2\|\bar c\|_\infty
\le 2\Delta_\square(c).
\]
\end{proof}

\begin{proof}[Proof of Theorem~\ref{thm:verifiable-poly-class}]
By invariance under separable shifts, it suffices to study the residual cost \(\bar c_\eps=\eps\ell_\eps\).
Its oscillation and spatial Lipschitz constants are constrained to $C_{\bar c,\eps} := \osc(\bar c_\eps) \le \eps\{A_0+\gamma_{\mathrm{osc}}\log(1/\eps)\}$ and $L_{\bar c,\cX}\vee L_{\bar c,\cY} \le L_0\eps^{1-q}$ respectively.
Mapping these into the parameters developed in Section~\ref{sec:exponential} defines the upper bounds $B_{\bar c,\eps} \le e^{2A_0}\eps^{-2\gamma_{\mathrm{osc}}}$ and $(1-\tau_{\bar c,\eps}^2)^{-1} \le e^{A_0}\eps^{-\gamma_{\mathrm{osc}}}$.
By Lemma~\ref{lem:exp-entropy} and the metric covering assumption,
\[
N(t,\mathcal R_{\nu,\eps},\|\cdot\|_\infty)
\le
\left(
1+
\frac{
2D_{\cX}B_{\bar c,\eps}L_0\eps^{-q}
}{t}
\right)^{\mathfrak d_{\cX}}.
\]
The analogous inequality holds for \(\mathcal R_{\mu,\eps}\). Consequently, using standard entropy bounds, we obtain the integral estimate
\begin{align*}
\Gamma_{\nu,\eps}
&\le
\int_0^{B_{\bar c,\eps}}
\sqrt{
 \log 2+
 \mathfrak d_{\cX}
 \log\left(
  1+\frac{2D_{\cX}B_{\bar c,\eps} L_0\eps^{-q}}{t}
 \right)
}\,\dd t                                                    \\
&\le
C B_{\bar c,\eps}
\left[
 1+
 \sqrt{
  \mathfrak d_{\cX}
  \left\{
   1+\log(1+2D_{\cX}L_0\eps^{-q})
  \right\}
 }
\right].
\end{align*}
An analogous bound holds for $\Gamma_{\mu,\eps}$.
Utilizing Lemma~\ref{lem:emp-process} subsequently yields the empirical fluctuations $Z_{\mu,n}\vee Z_{\nu,n} \le C B_{\bar c,\eps} \Lambda_{\eps,\delta} n^{-1/2}$ with probability at least $1-\delta$.
Returning to the deterministic inequality $d_\infty([\widehat{\bar f}_{\eps,n}],[\bar f_\eps]) \le \eps B_{\bar c,\eps}(1-\tau_{\bar c,\eps}^2)^{-1}(\tau_{\bar c,\eps}Z_{\mu,n}+Z_{\nu,n})$ (noting $\tau_{\bar c,\eps}\le1$), we achieve $d_\infty([\widehat{\bar f}_{\eps,n}],[\bar f_\eps]) \le C\eps B_{\bar c,\eps}^2 (1-\tau_{\bar c,\eps}^2)^{-1} \Lambda_{\eps,\delta} n^{-1/2}$.
Combining the exponential bounds yields $\eps B_{\bar c,\eps}^2(1-\tau_{\bar c,\eps}^2)^{-1} \le e^{5A_0}\eps^{1-5\gamma_{\mathrm{osc}}}$, which completes the proof.
\end{proof}

\begin{proof}[Proof of Corollary~\ref{cor:bounded-interaction}]
This is confirmed immediately by substituting the parameters $\gamma_{\mathrm{osc}}=q=0$ into the bounds established in Theorem~\ref{thm:verifiable-poly-class}.
\end{proof}

\begin{proof}[Proof of Lemma~\ref{lem:edge-jacobian}]
To track the equilibrium scaling vectors formally, we define the multivariate objective equations characterizing the mass fractions:
\[
 \Psi(\alpha,v;m,\widetilde n,z) = \sum_{i,j}m_i\widetilde n_jz_{ij} e^{\alpha_i+v_j} -\sum_i m_i\alpha_i-\sum_j\widetilde n_jv_j, \qquad v_J=0.
\]
We set up the Implicit Function Theorem by analyzing the derivative operator:
\[
 F(\alpha,v;m,\widetilde n,z) = \nabla_{(\alpha,v_1,\ldots,v_{J-1})}\Psi(\alpha,v;m,\widetilde n,z).
\]
Since $\Pi_E(\boldsymbol\mu,\boldsymbol\nu)$ is nonempty and compact, and the Kullback--Leibler divergence is strictly convex, the minimizer $\pi^E$ exists and is unique. Definition~\ref{def:tight-graph-model} assumes the existence of a strictly positive feasible coupling on $E$. If $\pi^E_{ij}=0$ for some $(i,j)\in E$, mixing $\pi^E$ with this strictly positive coupling yields a direction where the one-sided directional derivative of the entropy is $-\infty$, contradicting optimality. Therefore, $\pi^E_{ij}>0$ for all $(i,j)\in E$.
The first-order interior KKT conditions for this constrained minimization yield the scaling formula $\pi^E_{ij} = \mu_i\nu_j e^{\alpha_i^E+v_j^E}\mathbf 1_E(i,j)$. Because the optimal set $E$ forms a connected bipartite graph, the scaling vectors are unique up to a shared transformation $(\alpha,v)\mapsto(\alpha+t,v-t)$. Applying the gauge $v_J=0$ removes this invariance and ensures uniqueness.
At the unregularized limit matrix $z_{ij}^0 = \mathbf 1_{\{(i,j)\in E\}}$, taking the Hessian of $\Psi$ with respect to the remaining unconstrained variables yields exactly the quadratic form:
\begin{equation}
 \sum_{(i,j)\in E}\pi^E_{ij}(u_i+v_j)^2, \qquad \text{with } v_J=0.
\end{equation}
Since the graph $E$ is connected and the optimal coupling $\pi^E$ is strictly positive on these edges, this quadratic form is strictly positive definite. The constrained Jacobian $\nabla F$ is therefore proved invertible at the limiting configuration. 
\end{proof}

\begin{proof}[Proof of Theorem~\ref{thm:discrete-poly}]
We subtract the separable terms $a_i+b_j$ by utilizing the non-negative reduced cost $\bar c$, which does not structurally alter potential error distances.
We track variations employing the implicit-function framework laid out in Lemma~\ref{lem:edge-jacobian}.
Because the derivative concerning the scaling coordinates is already fully invertible by Lemma~\ref{lem:edge-jacobian}, the finite-dimensional Implicit Function Theorem asserts unique, continuously differentiable scaling maps across a local envelope around $(\alpha^E, v^E, \boldsymbol\mu, \boldsymbol\nu, z^0)$.
After bounding the derivative inside this shell to a finite bound $C_1$, we deduce the distance bounds:
\begin{equation}
 \|\alpha(m,n',z)-\alpha(\boldsymbol\mu,\boldsymbol\nu,z)\|_\infty + \|v(m,n',z)-v(\boldsymbol\mu,\boldsymbol\nu,z)\|_\infty \le C_1( \|m-\boldsymbol\mu\|_\infty+ \|n'-\boldsymbol\nu\|_\infty ).
\end{equation}
In models missing certain edges $E\ne\cX\times\cY$, we explicitly define the discrete off-edge gap:
\begin{equation}
 \Delta_0 = \min_{(i,j)\notin E}\bar c_{ij} > 0.
\end{equation}
For edges $(i,j)\notin E$, we bind the trajectory via $z_{ij} = e^{-\bar c_{ij}/\eps} \le e^{-\Delta_0/\eps} \to 0$.
Thus, a specific threshold $\eps_0>0$ will exist maintaining the population matrix uniformly inside this implicit-function sphere.
For this residual mapping, Sinkhorn targets mirror explicit scaling vector evaluations $\bar f_{\eps,i} = \eps\alpha_i(\boldsymbol\mu,\boldsymbol\nu,z(\eps))$.
For $z_{ij}>0$ and positive marginals, uniqueness of finite Sinkhorn scaling implies that this local solution coincides with the empirical Schrödinger scaling.
As long as empirical sampling statistics remain in this neighborhood, we only have to substitute $(\widehat{\boldsymbol\mu}_n,\widehat{\boldsymbol\nu}_n)$ to evaluate the empirical potentials.
To ensure the empirical marginals remain strictly positive during sampling and fall within the implicit-function theorem neighborhood (radius $\rho_{\mathrm{IFT}}$ measured in the product $\ell^\infty$ norm), we impose the smallness condition:
\begin{equation}
 \eta_0 < \min\left\{ \rho_{\mathrm{IFT}}, \frac12\min_i\mu_i, \frac12\min_j\nu_j \right\}.
\end{equation}
Combining standard Hoeffding bounds with a union bound guarantees that variations are constrained by $t_{n,\delta}$ with probability at least $1-\delta$.
By enforcing the bounds $t_{n,\delta} \le \eta_0$, we ensure the empirical marginals remain strictly positive and within the valid domain, yielding $\|\widehat{\bar f}_{\eps,n}-\bar f_\eps\|_\infty \le 2C_1\eps t_{n,\delta}$.
Since the quotient distance is bounded by the standard supremum norm, combining these estimates completes the proof.
\end{proof}

\begin{proof}[Proof of Lemma~\ref{lem:binary-embedding}]
For either two-point metric space and every \(u>0\),
\[
    N(u,\cX,d_{\cX})
      \le 1+\frac1u,
    \qquad
    N(u,\cY,d_{\cY})
      \le 1+\frac1u.
\]
Moreover,
\[
    \osc_{\cX\times\cY}(\ell)=2\kappa
      \le A_0
\]
and
\[
    \sup_{y\in\cY}\operatorname{Lip}_x(\ell(\cdot,y))
    \vee
    \sup_{x\in\cX}\operatorname{Lip}_y(\ell(x,\cdot))
      =2\kappa
      \le L_0.
\]
Thus \(c_\eps=0+0+\eps\ell\) satisfies Definition~\ref{def:poly-interaction-class} with \(\gamma_{\mathrm{osc}}=q=0\).
\end{proof}

\begin{proof}[Proof of Lemma~\ref{lem:binary-potential-separation}]
Let \(\pi_t\) be the entropic optimal coupling for \((\mu,\nu_t,c_\eps)\). All four entries of \(\pi_t\) are strictly positive. Indeed, \(\mu\otimes\nu_t\) is strictly positive and feasible; if an entropic optimizer had a zero entry, mixing it with this coupling would give a feasible direction with entropy derivative \(-\infty\), contradicting optimality. Every coupling with the prescribed marginals has the form
\[
    \pi_t=
    \begin{pmatrix}
      p_t & \frac12-p_t\\[2mm]
      \frac{1+t}{2}-p_t & p_t-\frac t2
    \end{pmatrix},
\]
where the rows and columns are ordered as \(+1,-1\).
Differentiating the entropic objective with respect to \(p\) shows that \(p_t\) is characterized by
\[
    \frac{p_t(p_t-t/2)}
         {(1/2-p_t)((1+t)/2-p_t)}
      =e^{4\kappa}.
\]
Equivalently,
\[
    F(p,t)
      :=
      \log p+\log(p-t/2)
      -\log(1/2-p)-\log((1+t)/2-p)-4\kappa
      =0.
\]
At \(t=0\),
\[
    p_0=\frac{1+\rho}{4},
    \qquad
    \frac12-p_0=\frac{1-\rho}{4},
    \qquad
    \rho:=\tanh\kappa.
\]
Since \(\partial_pF(p_0,0)>0\), the implicit function theorem shows that \(t\mapsto p_t\) is continuously differentiable near zero. Furthermore,
\[
    \partial_pF(p_0,0)
      =2\left(\frac1{p_0}+\frac1{1/2-p_0}\right),
    \qquad
    \partial_tF(p_0,0)
      =-\frac12\left(\frac1{p_0}+\frac1{1/2-p_0}\right),
\]
and hence
\[
    p'_0=-\frac{\partial_tF(p_0,0)}
                {\partial_pF(p_0,0)}
          =\frac14.
\]
The Schr\"odinger representation of the optimal coupling is
\[
    \pi_t(x,y)
      =\mu(x)\nu_t(y)
       \exp\left(
         \frac{f_{\eps,t}(x)+g_{\eps,t}(y)
                    -c_\eps(x,y)}
              {\eps}
       \right).
\]
Taking the ratio of the two entries in the \(y=+1\) column gives
\[
    \varphi_\kappa(t)
      :=
      \frac{f_{\eps,t}(+1)
                 -f_{\eps,t}(-1)}
           {\eps}
      =
      \log\frac{p_t}{(1+t)/2-p_t}-2\kappa.
\]
In particular, \(\varphi_\kappa(0)=0\), and the preceding calculation yields
\[
    \varphi_\kappa'(0)
      =\frac1{4p_0}
        -\frac1{4(1/2-p_0)}
      =-\frac{2\rho}{1-\rho^2}.
\]
By continuity of \(\varphi_\kappa'\), there is \(\bar t_\kappa\in(0,1/4]\) such that
\[
    |\varphi_\kappa(t)|
      \ge
      \frac{\rho}{1-\rho^2}|t|,
    \qquad |t|\le\bar t_\kappa.
\]
It remains to compare the two equivalence classes. Simultaneously replacing \(x\) and \(y\) by \(-x\) and \(-y\) leaves \(\mu\) and \(c_\eps\) invariant and maps \(\nu_t\) to \(\nu_{-t}\). Uniqueness of the potentials modulo constants therefore gives \(\varphi_\kappa(-t)=-\varphi_\kappa(t)\). Since, for functions \(h_0,h_1\) on a two-point space,
\[
    d_\infty([h_0],[h_1])
      =
      \frac12
      \left|
        \{h_0(+1)-h_0(-1)\}
        -\{h_1(+1)-h_1(-1)\}
      \right|,
\]
we obtain
\[
    d_\infty\bigl([f_{\eps,t}],
                  [f_{\eps,-t}]\bigr)
      =\eps|\varphi_\kappa(t)|
      \ge
      \frac{\rho}{1-\rho^2}\eps|t|.
\]
\end{proof}

\begin{proof}[Proof of Theorem~\ref{thm:matching-minimax-lower-bound}]
For \(0<t\le1/2\),
\[
    \mathrm{KL}(\nu_t\mid\nu_{-t})
      =
      t\log\frac{1+t}{1-t}
      \le 4t^2.
\]
Since the \(\mu\)-sample has the same law under both models,
\[
    \mathrm{KL}
      \bigl(\Prob_t^{(n)}\mid\Prob_{-t}^{(n)}\bigr)
      \le4nt^2.
\]
Choose \(t_n=(8\sqrt n)^{-1}\) and take \(n_\kappa\) sufficiently large that \(t_n\le\bar t_\kappa\). Then
\[
    \mathrm{KL}
      \bigl(\Prob_{t_n}^{(n)}
              \mid\Prob_{-t_n}^{(n)}\bigr)
      \le\frac1{16}.
\]
Pinsker's inequality \cite{Tsybakov2009} consequently gives
\[
    \mathrm{TV}
      \bigl(\Prob_{t_n}^{(n)},
            \Prob_{-t_n}^{(n)}\bigr)
      \le\sqrt{\frac1{32}}<\frac14.
\]

Set
\[
    \Delta_n:=
    d_\infty\bigl([f_{\eps,t_n}],
                  [f_{\eps,-t_n}]\bigr).
\]
Given any estimator, construct the test which selects the closer of \([f_{\eps,t_n}]\) and \([f_{\eps,-t_n}]\). Whenever this test is wrong, the triangle inequality implies that the estimation error is at least \(\Delta_n/2\). The two-point testing identity therefore gives
\[
    \max_{\sigma\in\{-1,+1\}}
    \Prob_{\sigma t_n}^{(n)}
    \left(
      d_\infty([\widehat f],[f_{\eps,\sigma t_n}])
        \ge\frac{\Delta_n}{2}
    \right)
    \ge
    \frac{1-
      \mathrm{TV}(\Prob_{t_n}^{(n)},
                         \Prob_{-t_n}^{(n)})}{2}
    \ge\frac38.
\]
Lemma~\ref{lem:binary-potential-separation} yields
\[
    \frac{\Delta_n}{2}
      \ge\frac{m_\kappa}{16}
          \frac{\eps}{\sqrt n}.
\]
This proves the probability bound. The same testing reduction gives
\[
    \max_{\sigma\in\{-1,+1\}}
    \E_{\sigma t_n}
      d_\infty([\widehat f],[f_{\eps,\sigma t_n}])
      \ge
      \frac{\Delta_n}{4}
      \left(
        1-
        \mathrm{TV}(\Prob_{t_n}^{(n)},
                           \Prob_{-t_n}^{(n)})
      \right),
\]
which proves the expected-risk assertion.
\end{proof}

\begin{proof}[Proof of Theorem~\ref{thm:matching-deviation-lower-bound}]
Put
\[
    s_\delta:=1+\sqrt{\log(1/\delta)},
    \qquad
    t_{n,\delta}:=\frac{s_\delta}{16\sqrt n}.
\]
The sample-size condition ensures that \(t_{n,\delta}\le\bar t_\kappa\). As above,
\[
    \mathrm{KL}
      \bigl(\Prob_{t_{n,\delta}}^{(n)}
              \mid\Prob_{-t_{n,\delta}}^{(n)}\bigr)
      \le4nt_{n,\delta}^2
      =\frac{s_\delta^2}{64}.
\]
Since
\[
    s_\delta^2
      \le2\bigl(1+\log(1/\delta)\bigr),
\]
the last display is bounded by
\[
    \frac1{32}\bigl(1+\log(1/\delta)\bigr).
\]
For arbitrary probability measures \(P,Q\), the Bretagnolle--Huber inequality \cite{Tsybakov2009} implies
\[
    1-\mathrm{TV}(P,Q)
      \ge\frac12e^{-\mathrm{KL}(P\mid Q)}.
\]
The same testing reduction as in Theorem~\ref{thm:matching-minimax-lower-bound} therefore yields
\[
    \max_{\sigma\in\{-1,+1\}}
    \Prob_{\sigma t_{n,\delta}}^{(n)}
    \left(
      d_\infty(
        [\widehat f],
        [f_{\eps,\sigma t_{n,\delta}}]
      )
      \ge\frac{\Delta_{n,\delta}}2
    \right)
    \ge
    \frac14
    \exp\left[
      -\mathrm{KL}\bigl(
        \Prob_{t_{n,\delta}}^{(n)}
        \mid\Prob_{-t_{n,\delta}}^{(n)}
      \bigr)
    \right]
    \ge
    \frac14e^{-1/32}\delta^{1/32},
\]
where
\[
    \Delta_{n,\delta}
      :=
      d_\infty\bigl(
        [f_{\eps,t_{n,\delta}}],
        [f_{\eps,-t_{n,\delta}}]
      \bigr).
\]
For \(0<\delta\le1/8\),
\[
    \frac14e^{-1/32}\delta^{1/32}\ge\delta.
\]
Finally, Lemma~\ref{lem:binary-potential-separation} gives
\[
    \frac{\Delta_{n,\delta}}2
      \ge
      \frac{m_\kappa}{32}
      \frac{\eps}{\sqrt n}
      \left(1+\sqrt{\log\frac1\delta}\right),
\]
which proves the result.
\end{proof}
\end{appendix}


\begin{thebibliography}{99}
\raggedright

\bibitem{AltschulerNilesWeedStromme2022}
J.~M.~Altschuler, J.~Niles-Weed, and A.~J.~Stromme.
\newblock Asymptotics for semi-discrete entropic optimal transport.
\newblock \emph{SIAM Journal on Mathematical Analysis}, 54(2):1718--1741, 2022.

\bibitem{Birkhoff1957}
G.~Birkhoff.
\newblock Extensions of Jentzsch's theorem.
\newblock \emph{Transactions of the American Mathematical Society}, 85(1):219--227, 1957.

\bibitem{BoucheronLugosiMassart2013}
S.~Boucheron, G.~Lugosi, and P.~Massart.
\newblock \emph{Concentration Inequalities: A Nonasymptotic Theory of Independence}.
\newblock Oxford University Press, 2013.

\bibitem{CarlierChizatLaborde2024}
G.~Carlier, L.~Chizat, and M.~Laborde.
\newblock Displacement smoothness of entropic optimal transport.
\newblock \emph{ESAIM: Control, Optimisation and Calculus of Variations}, 30:article 25, 2024.

\bibitem{EcksteinNutz2022}
S.~Eckstein and M.~Nutz.
\newblock Quantitative stability of regularized optimal transport and convergence of Sinkhorn's algorithm.
\newblock \emph{SIAM Journal on Mathematical Analysis}, 54(6):5922--5948, 2022.

\bibitem{GenevayEtAl2019}
A.~Genevay, L.~Chizat, F.~Bach, M.~Cuturi, and G.~Peyr\'e.
\newblock Sample complexity of Sinkhorn divergences.
\newblock In \emph{AISTATS}, 2019.

\bibitem{GhosalNutzBernton2022}
P.~Ghosal, M.~Nutz, and E.~Bernton.
\newblock Stability of entropic optimal transport and Schr\"odinger bridges.
\newblock \emph{Journal of Functional Analysis}, 283(9):109622, 2022.

\bibitem{GoldfeldKatoRiouxSadhu2024}
Z.~Goldfeld, K.~Kato, G.~Rioux, and R.~Sadhu.
\newblock Limit theorems for entropic optimal transport maps and the Sinkhorn divergence.
\newblock \emph{Electronic Journal of Statistics}, 18(1):980--1041, 2024.

\bibitem{GonzalezSanzLoubesNilesWeed2022}
A.~Gonz\'alez-Sanz, J.-M.~Loubes, and J.~Niles-Weed.
\newblock Weak limits of entropy regularized optimal transport: potentials, plans and divergences.
\newblock \emph{arXiv:2207.07427}, 2022.

\bibitem{GonzalezSanzNutzStromme2026}
A.~Gonz\'alez-Sanz, M.~Nutz, and A.~J.~Stromme.
\newblock Finite-sample bounds for regularized optimal transport.
\newblock \emph{arXiv:2606.25947}, 2026.

\bibitem{GrecoTamanini2026}
G.~Greco and L.~Tamanini.
\newblock Hessian stability and convergence rates for entropic and Sinkhorn potentials via semiconcavity.
\newblock \emph{Bernoulli}, 32(3):2351--2378, 2026.

\bibitem{GroppeHundrieser2024}
M.~Groppe and S.~Hundrieser.
\newblock Lower-complexity adaptation in entropic optimal transport.
\newblock \emph{Journal of Machine Learning Research}, 25(344):1--55, 2024.

\bibitem{MenaNilesWeed2019}
G.~Mena and J.~Niles-Weed.
\newblock Statistical bounds for entropic optimal transport: sample complexity and the central limit theorem.
\newblock In \emph{Advances in Neural Information Processing Systems 32}, 2019.

\bibitem{Mordant2024}
G.~Mordant.
\newblock The entropic optimal (self-)transport problem: limit distributions for decreasing regularization with application to score function estimation.
\newblock \emph{arXiv:2412.12007}, revised 2026.

\bibitem{NutzWiesel2022potentials}
M.~Nutz and J.~Wiesel.
\newblock Entropic optimal transport: convergence of potentials.
\newblock \emph{Probability Theory and Related Fields}, 184:401--424, 2022.

\bibitem{NutzWiesel2022sinkhorn}
M.~Nutz and J.~Wiesel.
\newblock Stability of Schr\"odinger potentials and convergence of Sinkhorn's algorithm.
\newblock \emph{The Annals of Probability}, 51(2):699--722, 2023.

\bibitem{PooladianNilesWeed2024}
A.-A.~Pooladian and J.~Niles-Weed.
\newblock Plug-in estimation of Schr\"odinger bridges.
\newblock \emph{SIAM Journal on Mathematics of Data Science}, 7(3):1315--1336, 2025.

\bibitem{PuchkinEtAl2025}
N.~Puchkin, I.~Pustovalov, Y.~Sapronov, D.~Suchkov, A.~Naumov, and D.~Belomestny.
\newblock Sample complexity of Schr\"odinger potential estimation.
\newblock \emph{arXiv:2506.03043}, 2025.

\bibitem{RigolletStromme2022}
P.~Rigollet and A.~J.~Stromme.
\newblock On the sample complexity of entropic optimal transport.
\newblock \emph{The Annals of Statistics}, 53(1):61--90, 2025.

\bibitem{Stromme2023}
A.~J.~Stromme.
\newblock Minimum intrinsic dimension scaling for entropic optimal transport.
\newblock \emph{arXiv:2306.03398}, 2023.

\bibitem{Tsybakov2009}
A.~B.~Tsybakov.
\newblock \emph{Introduction to Nonparametric Estimation}.
\newblock Springer Series in Statistics. Springer, 2009.

\bibitem{Wainwright2019}
M.~J.~Wainwright.
\newblock \emph{High-Dimensional Statistics: A Non-Asymptotic Viewpoint}.
\newblock Cambridge Series in Statistical and Probabilistic Mathematics. Cambridge University Press, 2019.

\end{thebibliography}
\end{document}